\Crefname{figure}{Fig.}{Figs.}
\newcommand{\SumNoLim}[2]{\ensuremath{\sum_{#1}^{#2}}}
\newcolumntype{P}[1]{>{\centering\arraybackslash}p{#1}}
\newcommand{\xmarklight}{\ding{53}}%
\newcommand{\x}{{x}_{d}}
\newtheorem{theorem}{Theorem}
\def\BibTeX{{\rm B\kern-.05em{\sc i\kern-.025em b}\kern-.08em
    T\kern-.1667em\lower.7ex\hbox{E}\kern-.125emX}}
\begin{document}
\title{Fine-Grained Data Selection for Improved Energy Efficiency of Federated Edge Learning}
\author{Abdullatif Albaseer,~\IEEEmembership{Student~Member,~IEEE,}
        Mohamed Abdallah,~\IEEEmembership{Senior Member,~IEEE,}
     Ala Al-Fuqaha,~\IEEEmembership{Senior Member,~IEEE,}
     Aiman Erbad,~\IEEEmembership{Senior Member,~IEEE}
    \thanks{Abdullatif Albaseer, Mohamed Abdallah Ala Al-Fuqaha and~Aiman Erbad are with Division of Information and Computing Technology, College of science and engineering, Hamad Bin Khlifa University,Doha, Qatar (e-mail:\{amalbaseer, moabdallah, aalfuqaha,AErbad\}@hbku.edu.qa).}
}
\maketitle
\begin{abstract}
In Federated edge learning (FEEL), energy-constrained devices at the network edge consume significant energy when training and uploading their local machine learning models, leading to a decrease in their lifetime.
This work proposes novel solutions for energy-efficient FEEL by jointly considering local training data, available computation, and communications resources, and deadline constraints of FEEL rounds to reduce energy consumption.
This paper considers a system model where the edge server is equipped with multiple antennas employing beamforming techniques to communicate with the local users through orthogonal channels.
Specifically, we consider a problem that aims to find the optimal user's resources, including the fine-grained selection of relevant training samples, bandwidth, transmission power, beamforming weights, and processing speed with the goal of minimizing the total energy consumption given a deadline constraint on the communication rounds of FEEL.  
Then, we devise tractable solutions by first proposing a novel fine-grained training algorithm that excludes less relevant training samples and effectively chooses only the samples that improve the model's performance. 
After that, we derive closed-form solutions, followed by a Golden-Section-based iterative algorithm to find the optimal computation and communication resources that minimize energy consumption.  
Experiments using MNIST and CIFAR-10 datasets demonstrate that our proposed algorithms considerably outperform the state-of-the-art solutions as energy consumption decreases by 79\% for MNIST and 73\% for CIFAR-10 datasets.

\end{abstract}

\begin{IEEEkeywords}
Federated Edge Learning (FEEL), Edge Intelligence, Data selection, Learning Algorithm,  Energy consumption, Convergence rate, Resource allocation.
\end{IEEEkeywords}

\IEEEdisplaynontitleabstractindextext

\IEEEpeerreviewmaketitle
\section{INTRODUCTION}
 The extraordinary improvements in the Internet of Things (IoT), ubiquitous communications, and artificial intelligence (AI) have induced an exponential growth in the size of data generated every day by edge devices (e.g., IoT devices, smartphones, sensors, actuators). According to Cisco, the increase in data generated by people, machines, and things is anticipated to be in the millions of billions of gigabytes,  {nearly $77.5$ exabytes per month by 2022~\cite{forecast2019cisco}. }
 
 Leveraging the proliferation of AI, the generated data, and mobile edge computing (MEC) techniques ~\cite{li2019edge,wang2019edge,zhu2020toward} can bring valuable innovative services to end-users~\cite{lecun2015deep}. 
 Lately, MEC techniques received a lot of attention from practitioners and researchers because of their potential in reducing latency and delivering an elegant quality of experience for end devices \cite{mao2017survey}. 
 It is envisioned that MEC will be an enabling tool for sixth-generation (6G) networks permitting new emerging applications, such as human-centric services, virtual reality, and augmented reality; consequently, realizing the vision of network intelligence\cite{letaief2019roadmap,mo2020energy}. 
 Yet, transferring massive volumes of user data to a central server brings out unwanted communication costs and security risks due to the networks' limitations, scalability issues, inadequate bandwidth, and most importantly, users' privacy. 

Recently, federated edge learning ({FEEL}) has emerged as a potential candidate that utilizes MEC to address these challenges\cite{lim2020federated,guo2020feel}. FEEL can be seen as a cutting-edge collaborative machine learning \textbf{(ML)} technique for future IoT and edge systems\cite{mcmahan2016communication,smith2017federated,saad2019vision,park2019wireless}.
FEEL strives to collaboratively train a shared \textbf{ML} model on client devices while maintaining their privacy since data remains where it is produced, and only resultant model parameters are shared with the server~\cite{lim2020federated}. 
In FEEL rounds, the global model updating task, along with the associated computation and communication phases, expend significant energy even though the edge devices themselves are typically energy-constrained.

Energy constrained edge devices might negatively affect FEEL performance as the battery level may limit the worker's ability to take part in more FEEL communication rounds; therefore, leading to a slower convergence rate. 
 It should be emphasized that our work focuses on the fine-grained selection of training samples. In this work, the fine-grained selection refers to using a subset of the local data samples for updating the global model. This fine-grained selection can be clearly contrasted with the literature, which focuses on the coarse-grained selection of workers whose datasets are either fully included or fully excluded (i.e., no partial inclusion/exclusion of data samples).
 Some remarkably critical yet overlooked questions are: do all local data samples contribute equally to global model improvement? How can data be filtered to conserve energy while attaining the desired performance considering both network and device resource constraints? Does the deadline constraint help in optimizing the processing and transmission power? Are there other system parameters that can be optimized to save energy further?
 
 To this end, this paper contributes to the state-of-the-art by introducing a novel approach for model training, local computation, and communication resource allocation to support energy-efficient FEEL systems. 
In this work, we adopt a unique approach in which we explore the fine-grained selection of training data for improved energy efficiency of FEEL. Our proposed approach is motivated by the fact that not all local data samples can significantly contribute to the global model. 
 We consider the FEEL system using a practical wireless network setting in which the battery-constrained edge devices are connected to the edge server. 
 The edge server is equipped with multiple antennas that employ beamforming techniques to maximize the signal-to-noise ratio.
 Each edge device represents a worker that trains its local learning model using its local data and then sends the model parameters back to the edge server. 
 The edge server aggregates all local parameters and forms a global model. 
 Then the resulting model is broadcasted to the workers for further updates. 
 Due to the limited bandwidth, only a subset of these workers is chosen at every FEEL round to take part in the training process. To synchronize the updates and avoid long waiting times, the server employs a FEEL round deadline constraint.
 The key contributions of this work can be summarized as follows:

\begin{itemize}
  \item Formulate a joint optimization problem with the aim of minimizing the energy consumption and the optimal allocation of available resources while satisfying the learning performance and the deadline constraint, 
  
\item Introduce tractable solutions to solve this problem. 
We first propose a fine-grained data selection algorithm that leverages the global model to select only the samples that contribute to improving the model's performance before executing local training steps. 
Then, we present a mathematical proof that supports the intuition behind the proposed algorithm and show the fundamental logic behind excluding the data samples predicted with high probability. 

\item Utilize the deadline constraint to give each worker more flexibility to reduce energy consumption further. 
Each worker exploits the waiting time as an opportunity to reduce the computation and transmission power rather than sending the update immediately once the training is completed, 
    
\item  Derive closed-form solution followed by a Golden-Section based iterative algorithm to find the optimal solution for optimal beam vector, allocated bandwidth, local CPU speed, and transmission power that minimize the total energy consumption,  
    
\item Carry out extensive simulation experiments using MNIST and CIFAR-10 datasets under independent and identically distributed (i.i.d) and non-i.i.d. data distribution assumptions to empirically verify the theoretical analysis. 
Our experiments demonstrate that the proposed technique can substantially reduce the local energy consumption compared to the baseline FEEL algorithm while achieving similar accuracy. 

\end{itemize}

The remainder of the paper is organized as follows. Related literature is presented in Section~\ref{relatedwork}. Next, Section~\ref{sysmodel} presents the system model. 
Subsequently, we formulate the problem statement in Section ~\ref{sec:Problemformulation} while the proposed approach supported by mathematical proof is given in Section~\ref{sec:proposedsol}. 
Section~\ref{sec:experiment} presents the experimental setup,  performance evaluation results and discussion, and the lessons learned.
Finally, we conclude our work and provide future research directions in~\ref{conclusion}.

\section{RELATED WORK}
\label{relatedwork}
The use of FL over wireless networks (i.e., FEEL) has received considerable interests in the literature ~\cite{sattler2019robust,lin2017deep,wang2019adaptive,zhu2019broadband,nishio2019client,amiri2019machine,yang2018federated,liu2019edge}. 
The work in \cite{naderializadeh2020communication} investigated the transmission delay for decentralized learning on the wireless channels, where every client is authorized to connect to its neighbors. 
In \cite{prof_nguyen_fl}, the authors proposed an optimization model for joint energy consumption and completion time on FEEL, considering the power allocation, local resources, and model performance.  
The work in \cite{wadu2020federated} studied the channel uncertainty where an optimization problem is formulated to minimize the loss function while considering the scheduling process and resource allocation. 
The authors in \cite{chen2020joint,chen2020convergence} investigated the joint optimization of model training and resource allocation. 
However, energy consumption was not considered.
\begin{table*}[t]
\small
    \centering
    \caption{\uppercase{Relationship between our work and the recent literature}}
    \begin{tabular}{|c|p{2cm}|p{2cm}|p{2cm}|p{2cm}|p{2cm}|} \hline
        \textbf{Ref} &  \textbf{FEEL Round Deadline} & \textbf{Completion Time} &  \textbf{Devices Heterogeneity}  &  \textbf{Energy Budget} & \textbf{Data Exclusion}   \\ \hline
        \cite{mo2020energy}  
        & \xmarklight  & \checkmark  & \checkmark & \xmarklight & \xmarklight \\
        \cite{zeng2020energy}  
        & \checkmark  & \checkmark & \checkmark & \xmarklight & \xmarklight \\ 
        \cite{wang2020federated}  
        & \xmarklight  & \checkmark & \checkmark & \xmarklight & \xmarklight \\ 
     \cite{yang2020energy}  
        & \xmarklight  & \checkmark  & \checkmark & \xmarklight & \xmarklight \\ 
      \cite{luo2020hfel}  
        & \xmarklight  & \checkmark  & \checkmark & \xmarklight & \xmarklight \\ 
              Our work 
        & \checkmark  & \checkmark   & \checkmark & \checkmark & \checkmark  \\ \hline
    \end{tabular}
    \label{tab:related}
\end{table*}

Focusing on energy constraints, Zeng \textit{et al.} \cite{zeng2020energy} considered the energy-efficiency of FEEL where the goal is to minimize the total energy consumption. 
A minimization problem is formulated, and a greedy allocation algorithm is proposed to allocate more resources to weak devices.
However, the local computation energy consumption is not considered as it is assumed to be fixed and uniform among all devices, which is impractical as the data is imbalanced, and the processing capabilities are varied.  In ~\cite{wang2020federated}, Wang \textit{et al.} studied  {energy-efficiency} for FEEL where the computation and communication resources are considered. 
An optimization problem is formulated to minimize the completion time as well as computation and transmission energy. 
Furthermore, the authors in ~\cite{mo2020energy} proposed an approach to minimize the total energy consumption across all workers during predefined training time. 

It is worth noting that some prior works \cite{zeng2020energy,wang2020federated,yang2020energy,mo2020energy,luo2020hfel} introduced their approaches in a coarse-grained level, assuming that all data samples of any selected client are used in every local iteration. 
The stragglers (i.e., the devices with bad channels, low CPU speed, or insufficient energy) aren't considered, which might delay the whole training cycle. 
Specifically, the deadline of FEEL communication rounds is not considered where the server has to wait before fusing the updates and starting a new FEEL training round. 
Often, the whole completion time is considered while overlooking the effects of data and device heterogeneity. Heterogeneity can lead to scenarios in which some devices finish their task earlier while others may stay ideal for a long time waiting for the fused model to start a new FEEL round. 
Also, the energy budget is not considered, which is essential to complete the updates. 

Although there is a lot of research devoted to studying energy-efficient FEEL \cite{zeng2020energy,wang2020federated,yang2020energy,mo2020energy,luo2020hfel}, our work differs from these works as shown in Table \ref{tab:related}. 
Specifically, we adopt a unique approach that allows for the fine-grained selection of training data for improved energy efficiency of FEEL. Our proposed  approach is motivated by the fact that not all local data samples can significantly contribute to the global model~\cite{Albaseer2021}.  {Throughout} this work, we jointly consider the learning algorithm and the corresponding imbalanced local data samples as well as resource and system constraints.  

\section{SYSTEM MODEL}
\label{sysmodel}
\begin{figure}[t]
\centering
  \includegraphics[width=\linewidth,height=7cm]{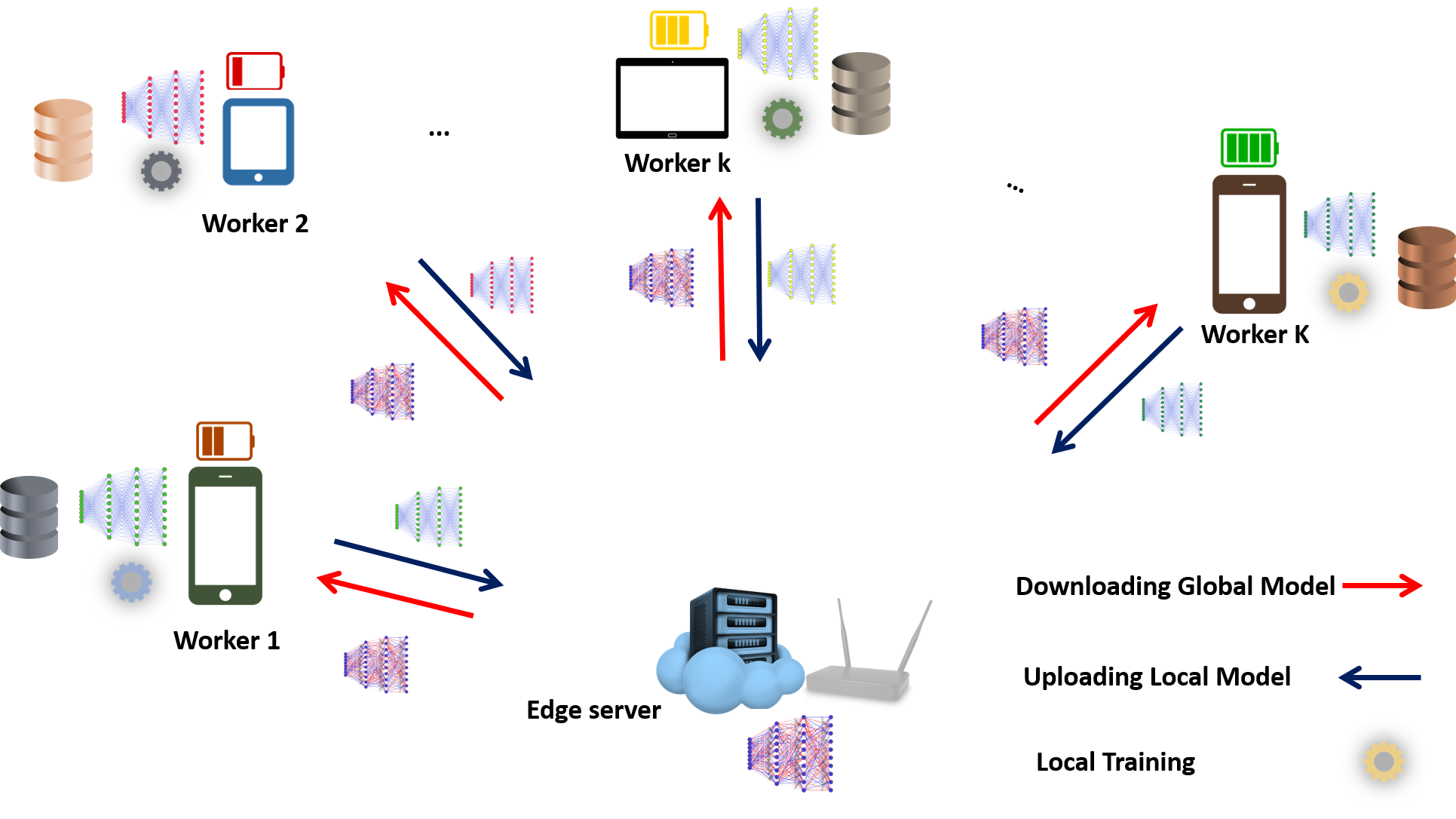}
\caption{ {FEEL where a $K$ battery-constrained edge devices are connected to edge server over RF Access point.}}
\label{fig:sysmodel}
\end{figure}
The considered FEEL system consists of a set of end devices $\mathbf{K}$ that are connected to the edge server as shown in~\Cref{fig:sysmodel}. 
The edge server is located in the contiguity of $\mathbf{K}$ workers to organize and coordinate the training process.  
Each worker $k \in \mathbf{K}$ uses its own data $\mathcal{D}_k$  to train its model $\mathcal{\theta}_k$ locally and then sends the updates (i.e., weights and biases) back to the server, where $\mathcal{D}_k=\{{x}_{k,d}\in \mathbb{R}^d, y_{k,d} \in\mathbb{R}\}$, and $|\mathcal{D}_k|$ is the portion of data samples and the whole data among workers is
$D\triangleq\sum_{k=1}^{K}|\mathcal{D}_{k}|$ where $K = |\mathbf{K}|$. 
Here, ${x}_{k,d}$ is the $d$-dimensional input data vector at the $k$-\textit{th} worker, and $y_{k,d}$ is the corresponding label associated with ${x}_{k,d}$.  
In return, the server collects and fuses all the workers' updates to build a global model. 
In each FEEL round, the server sets a deadline constraint to synchronize the updates and avoid long waiting times. 
Initially, the server sends random parameters $\mathcal{\theta}_0$ to all selected workers to start the training. On the worker side, the received global model is used as a reference to train the local models and control the divergence. 
All these steps incur massive energy consumption; thus, for each worker to join the learning process, local data samples, energy budget, FEEL round deadline, and computation and communication capabilities are considered to ensure a robust FEEL system and avoid losing selected worker updates due to insufficient energy or FEEL round deadline time out. 
To facilitate the presentation, we summarize the utilized main symbols in Table \ref{Tab:Notation}. 
\begin{table}[h]
\centering
\scriptsize
\caption{LIST OF IMPORTANT NOTATIONS}
\label{Tab:Notation}
\begin{tabular}{p{1cm} p{4.7cm}} 
 \hline
$\mathbf{K}$	& a set of collocated edge devices \\
$k$	& worker $k$ where $k \in \mathbf{K}$\\
$\mathcal{D}_k$ & the local data held by $k$-\textit{th} worker\\
$\mathcal{\theta}_r$ & model parameters at $r$-\textit{th} FEEL round \\
$F_r(\mathbf{\theta})$ & the global loss function at $r$-\textit{th} FEEL round \\
$f_{s}$ & the loss function that captures the error of each local data sample \\
$\theta_k$ & the local model parameters of the $k$-\textit{th} worker \\
$\mathbf{N}$ & number of local updates \\
$\eta$ & learning rate \\
$\varepsilon$ & number of epochs \\
$b$ & batch size \\
$T^{cmp}_k$ & local computation time of $k$-\textit{th} worker\\
$f^\mathrm{cmp}_k$ & the used CPU frequency at $k$-\textit{th} worker device\\
$\Phi$ & number of cycles required  to process one sample \\

$\textbf T$ & the FEEL round deadline constraint set by the server at every $r$-\textit{th} FEEL round \\
$T^\mathrm{ up}_k$ & the required time to upload the update to the server\\
$E_k^{cmp}$ & local energy consumption for every $k$-\textit{th} worker\\
$E^\mathrm{ up}_k $ & transmission energy consumption of  the $k$-{th} worker \\
$R_{k}^{up}$ & uplink data rate achieved by the $k$-{th} worker\\
${\mathbf h}_k$ & the uplink channel gain between the $k$-{th} worker and the $M$-antenna BS\\
${\mathbf w}_k$ & the $k$-{th} worker beamforming vectors received from $M$-antenna BS\\
$P^{ up}_k$ & the $k$-\textit{th} worker transmit power \\
${E}_{k}$ & the energy budget at $k$-\textit{th} worker\\
$f^\mathrm{ max}_k$ & maximum CPU frequency at $k$-\textit{th} worker\\
$f^\mathrm{ min}_k$ & minimum CPU frequency at $k$-\textit{th} worker\\
$\xi$ & model size \\
$P^\mathrm{max}_k$ & the maximum transmit power\\
$P^\mathrm{min}_k$ & the minimum transmit power\\
\hline
\end{tabular}
\end{table}

Before setting up and defining our problem,  we present an overview of the learning, computation, communication, and energy consumption models utilized in this work. 
\subsection{Feel Model}
\label{sec:loss}
The local loss function captures the performance of the model on a given dataset $\{{x}_{k,d},{y}_{k,d}\}$ for the $k$-\textit{th} worker at the $r$-\textit{th} FEEL round, and total loss over all data samples is defined as follows:
\begin{equation}
F^k_{r}(\mathbf{\theta_k}) \triangleq \frac{1}{\left|\mathcal{D}_{k}\right|}\sum_{s\in\mathcal{D}_{k}}f_{s}(\mathbf{\theta_k}).
\label{eq:localLossFuncAllSamples}
\end{equation}

where $f_{s}$ captures the error of each local data sample and $\theta_k$ is the local model parameters.

To train its local model, the $k$-\textit{th} worker runs its local solver, such as mini-batch stochastic gradient descent (SGD), locally to minimize the loss function defined in Eq. \eqref{eq:localLossFuncAllSamples} for several local epochs denoted by $\varepsilon$. 
Specifically, the local model parameters  $\theta_k$ are updated as follows:
    \begin{equation}
    \label{eq:local}
\mathbf{\theta^{(k)}_{n}}=\theta^{(k)}_{n-1}-\eta \nabla F^k_{r}(\theta_n^{(k)})
\end{equation}
 where $\eta$ is the step size (i.e., learning rate) at each FEEL round, $n = 1,2,\dots, \mathbf{N}$ local update index performed by the $k$-\textit{th} worker as: $\mathbf{N} = \varepsilon \frac{|\mathcal{D}_{k}|}{b}$ where $b$ is the batch size and $\varepsilon$ is the number of epochs.  {In \eqref{eq:local}, $\theta^{(k)}_{0}$ denotes the global parameters received from the server and $\theta^{(k)}_{\mathbf{N}}$ denotes the last local updated parameters by $k$-\textit{th} worker which will be sent back to the server after $\mathbf{N}$ local iterations. 
 In the rest of  {the paper}, we use $\theta^{(k)}_{\mathbf{N}}$ as $\theta^{(k)}_{r}$ to simplify the exposition.}

After uploading all local updates to the server, the global loss function at every $r$-\textit{th} FEEL round is defined as:
\begin{equation}
F_r(\mathbf{\theta}) \triangleq {\sum_{k=1}^{K}\delta_k F^k_{r}(\mathbf{\theta})}.
\label{eq:globalLossFuncAllSamples}
\end{equation}
where the local data samples $\delta_k$ is weighted as follows:
 \begin{equation}
 \label{weightedLocalData}
     \delta_k = \frac{ {|\mathcal{D}_{k}|}}{D}.   
 \end{equation} 
 
Accordingly, the global model parameters are computed as follow:
\begin{equation}
\mathbf{\theta^r} = {\sum_{k=1}^{K}\delta_k \theta^{(k)}_{r}}\label{eq:globalAverage}.
\end{equation}
$F_r(\mathbf{\theta})$ and $\mathbf{\theta_r}$ are sent to all selected workers in the ($r+1$)-\textit{th} FEEL round to train and update the model parameters. 
Thus, the aim is to find the global parameters $\mathbf{\theta}^{*}$ that minimize $F(\mathbf{\theta})$.
\begin{equation}
\mathbf{\theta}^{*} \triangleq \arg\min F(\mathbf{\theta}).
\label{eq:learningProblem}
\end{equation}

\subsection{Local Computation and Energy Models}
To train local models, each $k$-{th} worker partitions its local data $D_k$ into batches of size $b$ and trains its local model for a number of epochs $\varepsilon$. 
Thus, the local computation delay $T^{cmp}_k$ can be defined as:
\begin{align}
\label{eq:localtime2}
T^{cmp}_k =\varepsilon \frac{ {|\mathcal{D}_{k}}| \Phi}{f^\mathrm{cmp}_k}
\end{align} 
where $f^\mathrm{cmp}_k$ denotes the local processing speed (i.e., CPU frequency), and $\Phi$ is the number of cycles to handle one sample. 
To finish the local training, every $k$-\textit{th} worker consumes $E_k^{cmp}$ energy defined as\cite{mao2016dynamic}:
\begin{equation} 
    \label{eq:LocalEnergy}
   E_k^{cmp} = \frac{\alpha_k}{2} (f^\mathrm{cmp}_k)^3 T^{cmp}_k
\end{equation}
where $\frac{\alpha_k}{2}$ is the energy capacitance coefficient of a given device. 
Substituting \eqref{eq:localtime2} into the right hand-side of \eqref{eq:LocalEnergy} yields: 
\begin{align}
    \label{eq:LocalEnergy2}
   E_k^{cmp} = \frac{\alpha_k}{2} (\varepsilon (f^\mathrm{cmp}_k)^2  { {|\mathcal{D}_{k}|} \Phi})  
\end{align}
Specifically, after finishing the local model updates, each worker uploads its model to the edge server and waits for the fused global model to start a new FEEL round, as shown in \Cref{fig:prob_s}. 
This time can be exploited to conserve energy, as explained next.
\begin{figure}[t]
\centering
  \includegraphics[width=0.8\linewidth]{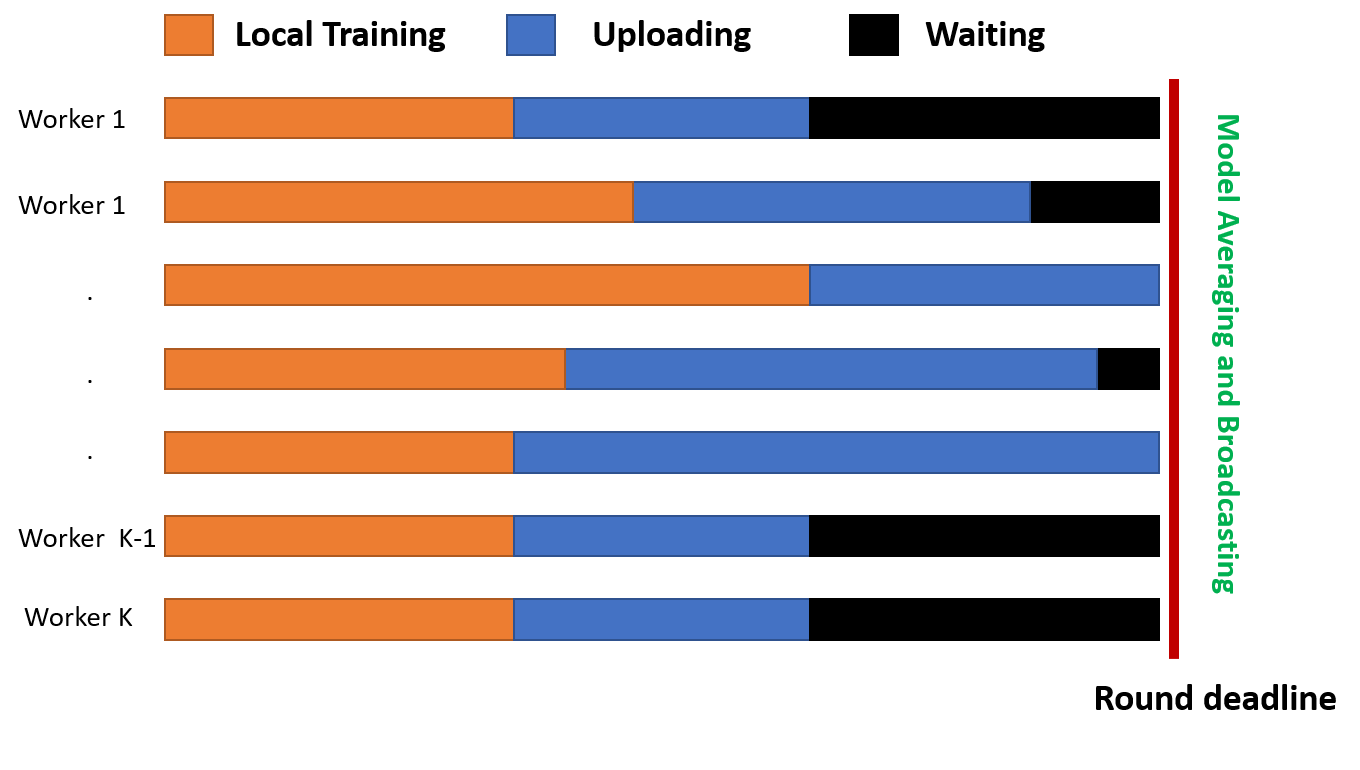}
\caption{FEEL Round over Wireless Channel (OFDMA) under data and resource heterogeneity.}
\label{fig:prob_s}
\end{figure}

\subsection{Transmission Delay and Energy Models}
For the communication model, we assume that the server is equipped with multiple antennas employing beamforming techniques to communicate with the local users through orthogonal frequency-division multiple access (OFDMA) channels with a total bandwidth $B$. 
Each k-\textit{th} worker is assigned a bandwidth $\lambda_k B$ to upload its update where $\lambda_k$ is the allocation ratio, $0 \le \lambda_k \le 1$. 
We denote the uplink channel gain between the $k$-{th} worker and the $M$-antenna edge server by ${\mathbf h}_k\in \mathbb C ^{M}$.
Subsequently, the achievable data rate for every $k$-{th} worker is defined as:

\begin{footnotesize}
\begin{align}\label{eq:achievedrate}
{R_{k}^{up} = \lambda_k B~\text {log}_2\left(1 + \frac{ \left| {  {\mathbf h}_k^H {\mathbf w}_k} \right|^2 P^{ up}_k}{ { \lambda_k B \mathbf w}_k^H \left( \sum\limits_{k' \ne k } {\mathbf h}_{k'} {\mathbf h}_{k'}^H +  \sigma^2_{0}  {\mathbf I} \right) {\mathbf w}_k }\right)},
\end{align} 
\end{footnotesize}
where ${\mathbf w}_k \in \mathbb C ^{M}$ denotes the beamforming weights, $P^{ up}_k$ is the $k$-\textit{th} worker transmission power, (.)$^H$ stands for the Hermitian operation, $\sigma^2_{0}$ is the spectral density power of the background noise, and ${\mathbf I}$ is the identity matrix. 
Accordingly, the uploading delay can be defined as:
\begin{align} \label{eq:uploadlatency}
 T^\mathrm{ up}_k = \frac{\xi}{R_{k}^{up}}
\end{align}
where $\xi$ denotes size of the model parameters. Further, the associated consumed energy is expressed as: 
\begin{align}\label{eq:uploading}
E^\mathrm{ up}_{k} = T^\mathrm{up}_k P^{ up}_k.
\end{align}
\setlength{\textfloatsep}{0pt}

 {In reality, the edge server needs to employ a FEEL round deadline constraint $\textbf T$ at every FEEL training round to synchronize the updates and avoid lengthy waiting times, especially for stragglers, to start a new global training round. 
Therefore, each $k$-{th} worker has to complete its computation and communication phases within $\textbf T$.} 
Formally, the computation and communication time for each worker has to satisfy this condition:
\begin{equation}
\label{eq:client_delay}
 T^\mathrm{ cmp}_k +  T^\mathrm{ up}_k \le \textbf T
\end{equation}
From \Cref{fig:prob_s}, we can note that some workers have to idly wait for the most recent combined model to start the next FEEL training round even if they finish the training and uploading tasks before the FEEL round deadline. 
Thus, in this work, instead of idly waiting for the fused model to be returned back from the server, we utilize this time as an opportunity for lowing the expended energy during the computation and communication phases by lowering the CPU speed and transmission power, respectively. To achieve this,  \eqref{eq:client_delay} is redefined as:
\begin{equation}
\label{eq:client_synch}
 T^\mathrm{ cmp}_k +  T^\mathrm{ up}_k = \textbf T
\end{equation}
\section{PROBLEM FORMULATION}
\label{sec:Problemformulation}

Given the above system model and discussions, we aim to minimize the total expended energy among workers during FEEL rounds, subject to constraints on learning performance, the computation and communication resource constraints, and the FEEL round deadline. 
We can formulate the optimization problem as follows:

\begin{subequations}
\footnotesize
\label{eq:OptmizedProblem1}
\begin{align}
	\textbf{$P_1$:} \quad   \underset{\mathcal{D}_k, P^{ up}_k, T^\mathrm{ up}_k, T^\mathrm{ cmp}_k,  \atop f^\mathrm{cmp}_k, \lambda_k, {\mathbf w}_k} \min \quad & 
\SumNoLim{r=1}{R} \SumNoLim{k}{K} \mathbf{I(k)}(E_k^{cmp} + E^\mathrm{ up}_{k} )   \label{eq:OptmizedProblema} 
\\
\text{s.t.:}  \nonumber
\\
          \quad & F(\mathbf{\theta}) - F(\mathbf{\theta^*}) \le \epsilon \label{eq:convergence_constraint}
\\
       \quad & \SumNoLim{k}{K} \mathbf{I(k)} \lambda_k \leq 1 \quad \label{eq:bandwidth_constraint}
\\
			\quad &\mathbf{I(k)}(E^\mathrm{cmp}_k + E^\mathrm{ up}_k) \le  {E}_{k}, \quad (\forall k) \label{eq:OptmizedProblemEnergy}
\\
			\quad &\mathbf{I(k)}( T^\mathrm{ cmp}_k + T^\mathrm{ up}_k) = \mathbf T, \quad (\forall k) \label{eq:OptmizedProblemDeadline}
\\
			\quad & P^\mathrm{min}_k \le P^{ up}_k \le P^\mathrm{max}_k, \quad (\forall k)	\label{eq:OptmizedProblem_pwr_transmit}
\\	
			\quad & f^\mathrm{ min}_k \le f^\mathrm{cmp}_k \le f^\mathrm{ max}_k, \quad (\forall k) \label{eq:OptmizedProblemEnergyPU}
\\
	\quad & R_{k}^{up} \ge \xi, \quad (\forall k) \label{eq:OptmizedProblem_up_model}
\\
			\quad & \left| {\mathbf w}_k \right|^2 = 1, \quad (\forall k) \label{eq:OptmizedProblemBeamForming}
\\
	    \quad &  \mathbf{I(k)} \in \{0, 1\}   \quad (\forall k)\label{eq:selection_var}
\end{align}
\end{subequations}

Constraint~\eqref{eq:convergence_constraint} is set to guarantee the convergence of the global federated model assuming that $\mathbf{\theta^*}$ is the optimal targeted model obtained using a virtual centralized ML algorithm and the whole datasets $D$. 
Constraint \eqref{eq:bandwidth_constraint} specifies that the total allocated bandwidth can not exceed the total allocated system bandwidth.
Constraint \eqref{eq:OptmizedProblemEnergy} ensures that the energy expended for computation and communication does not surpass the energy budget of any $k$-\textit{th} worker. 
This constraint ensures that the selected worker has sufficient energy to avoid losing the update.
The constraint \eqref{eq:OptmizedProblemDeadline} ensures that the total computation and upload time are restricted to the FEEL round deadline ${\mathbf T}$ to avoid longer waiting time. 
It is worth noting that in our work all selected workers have the same finishing time. This provides more flexibility when optimizing the CPU frequency and transmission power.
The transmission power of every selected worker is restricted in \eqref{eq:OptmizedProblem_pwr_transmit} to be between the minimum $P^\mathrm{min}_k$ and the maximum transmission power $ P^\mathrm{max}_k$. 
Constraint ~\eqref{eq:OptmizedProblemEnergyPU} ensures that the CPU-frequency of the $k$-\textit{th} worker ranges between the minimum $f^\mathrm{ min}_k$ and maximum $f^\mathrm{ max}_k$ CPU frequencies.
Constraint \eqref{eq:OptmizedProblem_up_model} ensures that the achievable upload rate of each $k$-\textit{th} worker is sufficient to send the model (i.e., the updated parameters) to the server.
 Constraint \eqref{eq:OptmizedProblemBeamForming} ensures that the beamforming vectors do not increase the total transmission power. 
 Last, constraint \eqref{eq:selection_var} is an indicator function that specifies whether the k-\textit{th} worker is selected $\mathbf{I(k)} =1$ in the $r$-\textit{th} FEEL round or not $\mathbf{I(k)}=0$.

Unfortunately, \textbf{$P_1$} is intractable and hard to solve as it requires future offline information such as energy budget level, channel states, and CPU speed for all participating workers. 
Such information is very challenging to be accurately predicted due to other running processes, dynamic channels, and availability (i.e., the device might be switched off or not connected to the server). 
Besides, constraint \eqref{eq:convergence_constraint}, which requires the optimal model parameters, is impractical under FEEL assumptions as the data is kept locally and can't be accessed by the server. 

\section{PROPOSED APPROACHES}
\label{sec:proposedsol}
To solve \textbf{$P_1$}, we first choose a fixed number of FEEL global rounds $R$ to be large enough while satisfying the desired accuracy. 
It is worth noting that it is difficult to find a closed-form that determines the correlation between the number of FEEL rounds and convergence in non-convex learning tasks (i.e., deep neural networks); thus, iterative training updates are used until converge. 
We then reformulate \textbf{$P_1$:} as an online optimization problem at every $r$-\textit{th} FEEL round, a subset of available clients can join the learning process, and the global model periodically evaluated. 
Formally, the reformulated problem is defined as: 
\begin{subequations}
\label{eq:OptmizedProblem2}
\begin{align}
	\textbf{$P_2$:} \quad   \underset{\mathcal{D}_k, P^{ up}_k, T^\mathrm{ up}_k, T^\mathrm{ cmp}_k,  \atop f^\mathrm{cmp}_k, \lambda_k, {\mathbf w}_k} \min \quad & 
 \SumNoLim{k}{K} \mathbf{I(k)}(E_k^{cmp} + E^\mathrm{ up}_{k} )   \label{eq:OptmizedProblema2} 
\\
\text{s.t.:}  \nonumber
\\
  & \eqref{eq:bandwidth_constraint}- \eqref{eq:selection_var} \nonumber
\end{align}
\end{subequations}
We can notice that \textbf{$P_2$} is still hard to solve due to the combinatorial nature of \textbf{$P_2$} which has high complexity search space over the selected workers. 
Also, variables $P^{ up}_k$, $T^\mathrm{ up}_k$, and $T^\mathrm{ cmp}_k $ are all coupled in constraints \eqref{eq:OptmizedProblemEnergy}, \eqref{eq:OptmizedProblemDeadline}, and \eqref{eq:OptmizedProblem_up_model}. 
This indicates that, it is impossible to reach the direct optimal solution for this problem. 
Thus, efficient tractable solutions with low-complexity are highly desired, and this motivates the design of the proposed algorithms as detailed in the following subsections.

First, we propose a novel local training algorithm that excludes less relevant data samples and effectively chooses the samples that improve the model's performance followed by a mathematical proof in Sections \ref{sec:proposedtrainingalg}, \ref{Theortical_proof}. 
This Algorithm enables the participating workers to select the optimal data samples $\mathcal{D}_k$ that reduce the computation time and conserve local energy consumption; therefore, leading to a further decrease in expended energy. 
Then, we find the optimal value for beamforming weights ${\mathbf w}_k$ and the allocated bandwidth $\lambda_k$, which in turn maximizes $R_{k}^{up}$ and leading to minimize the upload energy.
After that, in Section~\ref{iterativealgorithms}, we derive closed-form solutions followed by a Golden-Section based iterative algorithm to find the optimal solution for local CPU speed $f^\mathrm{cmp}_k$, and transmission power $P^{ up}_k$ to minimize energy consumption based on "reduced" local samples, the optimal value of ${\mathbf w}_k$ and $\lambda_k$. Last, we summarize the overall approach in Section~\ref{summayalg}

\subsection{Proposed Local Training Algorithm}
\label{sec:proposedtrainingalg}
In this Algorithm, all chosen workers receive  {the global model $\mathbf{\theta^{(k)}_{0}}$ from the corresponding server and utilize the whole local samples to update the received model parameters only once (i.e., initialization epoch $\varepsilon = 1$) to specialize the global parameters and reduce the divergence between the global and local models. 
Subsequently, all selected workers use the updated model in the first epoch to determine the local samples that need to be included or excluded. 
To this end, each local sample is fed into the model $\mathbf{\theta^{(k)}_{1}}$, which produces different probabilities based on a given number of classes. 
The maximum probability is compared to a predetermined threshold probability $\vartheta$. 
This threshold stipulates below which samples are included in later epochs while the samples predicted with a probability greater than threshold $\vartheta$ are excluded. }
Formally, this can be defined as follows:
\begin{equation}
\left\{
\begin{array}{l}
if \max\{p({x}_{d},\mathbf{\theta^{(k)}_{1}})\} > \vartheta \quad Exclude\\
if \max\{p({x}_{d},\mathbf{\theta^{(k)}_{1}})\} \le \vartheta \quad Include
\end{array}
\right.
\label{eq1}
\end{equation}
The motivation behind this algorithm stems from the fact that samples predicted with high probability do not contribute much to the loss function. 
The mathematical proof is given in Section \ref{Theortical_proof}. Given the number of local samples $|\mathcal{D}_{k}|$, we denote the number of excluded samples by $\kappa$, and the number of included samples in later epochs by $|\mathcal{D}_{k}|- \kappa$. 
Hence, the updated computation time needed to complete the local updating task can be redefined as:

\begin{align}
\label{eq:localtimeproposed}
T^{cmp}_k 
= \frac{(\varepsilon \phi |\mathcal{D}_{k}|) - \kappa (\varepsilon-1)}{f^\mathrm{cmp}_k}
\end{align}
Accordingly, the corresponding local energy computation is rewritten as:
\begin{align}
    \label{eq:LocalEnergyProposed}
   E_k^{cmp} = \frac{\alpha_k}{2} (\varepsilon-1) (f^\mathrm{cmp}_k)^2  {(|\mathcal{D}_{k}|-\kappa) \Phi}) +   \frac{\alpha_k}{2} (f^\mathrm{cmp}_k)^2  {|\mathcal{D}_{k}| \Phi}) 
\end{align}
The steps of this algorithm are summarized in Algorithm~\ref{alg:Proposedalg2}. 
In Algorithm~\ref{alg:Proposedalg2} step 1, each Worker $k$ updates the received global model $\mathbf{\theta^{(k)}_{0}}$ for one epoch. 
In steps 2-5, each worker uses the updated model $\mathbf{\theta^{(k)}_{1}}$ to filter all local samples and append only the ones that contribute most to the loss function. 
The appended samples with $P({x}_{d}, \mathbf{\theta^{(k)}_{1}}) \leq \vartheta$ are used to train the local model in the remaining epochs, steps 5 and 7.  
\begin{algorithm} [t]
\footnotesize
\setstretch{1.35}
    \caption{Local Training} \label{alg:Proposedalg2}
    \textbf{Local Updating:} Each worker $k$ updates $\mathbf{\theta^{(k)}_{0}}$ for one epoch\;
    \textbf{Local Predicting:} Each worker $k$ utilizes the model $\mathbf{\theta^{(k)}_{1}}$ updated in the first epoch to filter all local samples;
     \textbf{Set $\mathcal{D}_k^{r} = \{ \}$}\;
     \For{$d = 1$ to $|\mathcal{D}_{k}|$}{
     \If{$P({x}_{d}, \mathbf{\theta^{(k)}_{1}}) \leq \vartheta$}  
     { $\mathcal{D}_k^{r} = \mathcal{D}_k^{r} \cup\{{x}_{d} , {y}_{d}$\}
     }
     }
      \For{$epoch=2$ to $\varepsilon$}{
      Each worker $k$ continues the training task using only $\mathcal{D}_k^{r}$
      }
\end{algorithm}

\subsection{Mathematical Analysis}
\label{Theortical_proof}
This section presents mathematical proof that provides the theoretical foundation for excluding the data samples predicted with high probability. 
As we deal with a classification problem, we use the commonly-used cross-entropy loss function that computes the difference between the ground truth and the prediction as follows:
\begin{align}
\label{eq:cross_entropy}
 F^k_{r}(\mathbf{\theta})=\frac{1}{\left|\mathcal{D}_{k}\right|}\sum_{s\in\mathcal{D}_{k}} \mathbb{E}_{{\x, {y}_{d} \sim p}}[\sum_{c=1}^{C} p({{y}_{d}=c}) \log f_c(\x, \theta)]  \nonumber \\
= \frac{1}{\left|\mathcal{D}_{k}\right|}\sum_{s\in\mathcal{D}_{k}} \sum_{c=1}^{C} p({y}_{d}=c) \mathbb{E}_{\x|{y}_{d}=c}[\log f_c(\x, \theta)].
\end{align}
where $[C] = \{1, \ldots, C\}$ denotes the number of classes, and $p$ is the class probability distribution. 
We should note that we ignore the negative sign at the beginning of the formula, as in this analysis, we are only interested in the shape of the function. 
In general, each worker aims to solve the following learning problem:

\begin{equation}
 \arg\min F^k_{r}(\mathbf{\theta}).
\label{eq:learningProblem_worker}
\end{equation}
By substituting~(\ref{eq:cross_entropy}) into \eqref{eq:learningProblem_worker}, yields:
\begin{align}
\label{eq:local_cross_entropy}
  \min_{\theta_k} \frac{1}{\left|\mathcal{D}_{k}\right|}\sum_{s\in\mathcal{D}_{k}}\sum_{c=1}^{C} p({y}_{d}=c) \mathbb{E}_{\x|{y}_{d}=c}[\log f_c(\x, \theta)].
\end{align}

To find the optimal $\theta$, each k-\textit{th} worker  {uses mini-batch SGD as a local solver to iterativly solve~\eqref{eq:local_cross_entropy} as it converges directly to minima when the dataset is small}. Then, based on \eqref{eq:local}, the following updates is performed:

\begin{footnotesize}
\begin{align}
\label{eq:localUpdate1}
\theta^{(k)}_{n} =\theta^{(k)}_{n-1} - \eta \frac{1}{\left|\mathcal{D}_{k}\right|}\sum_{s\in\mathcal{D}_{k}}\sum_{c=1}^{C}  p^{(k)}({y}_{d}=c) \nabla_{\theta} \mathbb{E}[\log f_c(\x, \theta^{(k)}_{n-1})]
\end{align}
\end{footnotesize}

Where we write $\mathbb{E}_{\x|y=c}$ as $\mathbb{E}$ for brevity. To simplify the expression for the analysis, we consider only one data sample.
Therefore, the updated local model parameters through $\mathbf{N}$ local iterations every r-\textit{th} FEEL round can be defined as: 
\begin{equation}
\label{local:updates2}
  \theta^{(k)}_{r} = \theta  
^{r-1} - \eta \sum_{n=1}^{\mathbf{N}} \sum_{c=1}^{C}  p^{(k)}({y}_{d}=c) \nabla_{\theta} \mathbb{E}[\log f_c(\x, \theta^{(k)}_{n-1})],\end{equation}
In~\eqref{eq:localUpdate1} and \eqref{local:updates2}, $\theta^{(k)}_{0}$ is the global model parameters received by k-\textit{th} worker from the server and $\theta^{(k)}_{\mathbf{N}}$ is the updated model parameters sent by k-\textit{th} worker. 
We can notice that server averages the received local models updated using~\eqref{local:updates2}, thus the average of all received models can be rewritten as: 
\begin{align}
\mathbf{\theta_r}=&\sum_{k=1}^{K}\delta_k (\theta 
^{r-1} - 
 \nonumber \\
 & \eta \sum_{n=1}^{\mathbf{N}} \sum_{c=1}^{C} p^{(k)}({y}_{d}=c)\nabla_{\theta} \mathbb{E}[\log f_c(\x, \theta^{(k)}_{n-1})])
\label{eq:globalAverage1}.
\end{align}
\begin{theorem}\label{thm3}
If the output of the soft max probability $P \approx 1$. 
For any input sample $i$, the difference between the predicted class probability $p_i$ and ground truth label becomes closer to $0$:
\begin{align}
\label{eq:them2}
\setlength\abovedisplayskip{3pt}
\setlength\belowdisplayskip{3pt}
\sum_{c=1}^{C} p^{(k)}({y}_{d}=c)\nabla_{\theta} \mathbb{E}[\log f_c(\x, \theta^{(k)}_{n-1})] \approx 0
\end{align}
\end{theorem}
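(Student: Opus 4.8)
The plan is to reduce the left-hand side of~\eqref{eq:them2} to the classical softmax--cross-entropy gradient identity and then invoke the high-confidence hypothesis. First I would make the softmax structure explicit by writing $f_c(\x,\theta) = e^{z_c(\theta)}/\sum_{c'=1}^{C} e^{z_{c'}(\theta)}$, where the $z_c(\theta)$ are the pre-activation logits produced by the network, so that the expression differentiated in~\eqref{eq:them2} is exactly the per-sample cross-entropy (up to the sign the paper already agreed to drop). Since the statement restricts attention to a single sample, I would discard the inner expectation $\mathbb{E}_{\x|y_d=c}$ and apply the chain rule to push $\nabla_\theta$ through the logits, $\nabla_\theta \log f_c = \sum_{j}\frac{\partial \log f_c}{\partial z_j}\,\nabla_\theta z_j$.

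Second, I would substitute the standard softmax Jacobian $\partial f_c/\partial z_j = f_c(\delta_{cj}-f_j)$, which gives the clean form $\partial \log f_c/\partial z_j = \delta_{cj}-f_j$. Contracting this against the ground-truth distribution $p^{(k)}(y_d=c)$ and using $\sum_{c} p^{(k)}(y_d=c)=1$ collapses the inner sum:
\[
\sum_{c=1}^{C} p^{(k)}(y_d=c)\,\frac{\partial \log f_c}{\partial z_j} = p^{(k)}(y_d=j)-f_j(\x,\theta).
\]
Hence the entire left-hand side of~\eqref{eq:them2} equals $\sum_{j}\big(p^{(k)}(y_d=j)-f_j(\x,\theta)\big)\nabla_\theta z_j$, exposing the fact that the per-sample gradient is driven purely by the residual between the ground-truth label vector and the predicted softmax vector.

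Third, I would close the argument using the hypothesis $P\approx 1$. As the top softmax output approaches $1$, the predicted distribution $\{f_j\}$ becomes essentially one-hot; for a sample the model classifies correctly this one-hot vector coincides with the ground-truth label vector $\{p^{(k)}(y_d=j)\}$, so every residual $p^{(k)}(y_d=j)-f_j(\x,\theta)\approx 0$, the weighted sum of gradients vanishes, and~\eqref{eq:them2} follows.

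I expect the main obstacle to be conceptual rather than computational: the statement implicitly needs the high-confidence prediction to be \emph{correct}, because a confidently \emph{wrong} prediction also satisfies $P\approx 1$ yet yields a large residual and therefore a large gradient. I would therefore state explicitly that the result applies to samples the current model $\mathbf{\theta^{(k)}_{1}}$ already predicts correctly with probability exceeding $\vartheta$, which is precisely the regime in which Algorithm~\ref{alg:Proposedalg2} excludes a sample. This makes the link to the method transparent: the excluded samples contribute only near-zero gradient terms, so dropping them leaves the aggregated update in~\eqref{eq:globalAverage1} essentially unchanged while saving computation energy.
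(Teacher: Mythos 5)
Your proposal is correct and follows essentially the same route as the paper's proof: both arguments reduce the claim to the classical softmax--cross-entropy gradient identity, i.e., the per-sample gradient is driven by the residual $\hat{y}_c - y_c$ between the predicted softmax vector and the one-hot ground truth. Your Kronecker-delta contraction $\partial \log f_c/\partial z_j = \delta_{cj} - f_j$ is just a compact form of the paper's two-case Jacobian computation ($c = j$ giving $\hat{y}_c(1-\hat{y}_c)$, $c \ne j$ giving $-\hat{y}_c\hat{y}_j$), and your use of $\sum_c p^{(k)}(y_d = c) = 1$ mirrors the paper's appeal to one-hot encoding. Two things in your write-up go beyond the paper, both to your credit. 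First, you keep the chain rule to parameter space explicit via $\sum_j(p_j - f_j)\nabla_\theta z_j$, whereas the paper stops at the derivative with respect to the softmax inputs and makes a dimensionally dubious identification of $\log(\hat{y}_i)$ with $\nabla_{\theta}\mathbb{E}[\log f_c(\x,\theta^{(k)}_{n-1})]$; your version is the cleaner statement of the same fact. Second, and more substantively, your closing caveat identifies a real gap that the paper's own proof silently skips: the residual vanishes only when the high-confidence mass sits on the \emph{true} class, while both the theorem statement and the exclusion rule in Algorithm~\ref{alg:Proposedalg2} key on $\max_c p(\x_d,\theta^{(k)}_{1})$, which a confidently \emph{misclassified} sample also satisfies despite carrying a residual near $1$ and hence a non-negligible gradient. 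Conditioning the result on correct high-confidence predictions, as you propose, is the honest fix; the paper's argument as written does not address this case.
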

\begin{proof}
See Appendix \ref{appendix:theorm1}.
\end{proof}
From~\eqref{local:updates2}, \eqref{eq:globalAverage1} and Theorem 1, we can infer that when probability  of a certain class $c$ approaches $1$, $p \approx 1$,  the impacts of its values among the whole samples becomes less significant. 
The details are provided in the appendix~\ref{appendix:theorm1}. 

\subsection{Iterative Algorithm To Complete The Solutions
of \textbf{P$_2$}}
\label{iterativealgorithms}
In this section, we complete the tractable solutions for \textbf{P$_2$}.
First, for the beamforming, we first find the optimal beam weights ${\mathbf w}_k$ that maximize the attainable data rate which in return minimize the transmission energy as follows~\cite{tran2020lightwave}:
\begin{align}
\label{eq:beam}
{\mathbf w}^{\star}_j =  \arg \underset{\left| {\mathbf w}_k \right|^2 = 1} \max R_{k}^{up} \quad (\forall k).
\end{align}
According to Rayleight-Ritz quotient \cite{Parlett,tran2020lightwave}, ${\mathbf w}^{\star}_k$ can be obtained by finding the eigenvector corresponding to the largest eigenvalue of the matrix ${\mathbf h}_k^H \left( \sum_{k' \ne k } {\mathbf h}_{k'} {\mathbf h}_{k'}^H +  \sigma^2_{0}  {\mathbf I} \right)^{-1}$.  
 Henceforth, let $\beta_k = \frac{ \left| {  {\mathbf h}_k^H {\mathbf w}_k^{\star}} \right|^2 }{ {\mathbf w}_k^{\star H} \left( \sum\limits_{k' \ne k } {\mathbf h}_{k'} {\mathbf h}_{k'}^H +  \sigma^2_{0}  {\mathbf I} \right) {\mathbf w}_k^{\star} }$. The optimal allocated bandwidth for any worker that minimizes the energy consumption is as follows~\cite[Appendix D]{yang2020energy}:
\begin{equation}
\setlength\abovedisplayskip{3pt}
\setlength\belowdisplayskip{3pt}
  \lambda_k B = \frac{\xi \rm{ln}2}{\left(\mathbf T- T^\mathrm{ cmp}_k\right)\left(W\left(-\Pi_k e^{-\Pi_k}\right)+\Pi_k\right)}
  \label{thm1eq1}
\end{equation}
where  $W(\cdot)$ denotes Lambert-W function, $\Pi_k = \frac{\xi \rm{ln}2}{\left(\mathbf T- T^\mathrm{ cmp}_k\right) P^{ up}_k \beta_k}$.

Next, from \eqref{eq:achievedrate} and by using exponent of log rule, 
\eqref{eq:OptmizedProblem_up_model} is derived as:
\begin{align}\label{eq:local_transmit_power}
P^{ up}_k  =  \lambda_k B \frac{2^{\frac{\xi}{T^\mathrm{ up}_k \lambda_k B}} -1}{\beta_k}.
\end{align}
By substituting \eqref{eq:local_transmit_power} into the right hand side of \eqref{eq:uploading}, we have:
\begin{align}
\label{eq:energyUploading2}
E^\mathrm{up}_k= T^\mathrm{ up}_k  \lambda_k B \frac{2^{\frac{\xi}{T^\mathrm{ up}_k B}}-1}{\beta_k}.
\end{align}

Further, let $\rho = (\varepsilon \phi |\mathcal{D}_{k}|) - \kappa (\varepsilon-1) $, then considering \eqref{eq:LocalEnergy} and \eqref{eq:OptmizedProblem_pwr_transmit}, the uploading time as in constraint \eqref{eq:OptmizedProblemDeadline} can be rewritten as
\begin{align}\label{eq:OptmizedProblem_pwr_transmit1}
\mathbf T - \frac{\rho}{f^\mathrm{min}_k} \leq  T^\mathrm{ up}_k  \leq  \mathbf T - \frac{\rho }{f^\mathrm{max}_k}  \quad (\forall k)
\end{align}
 From~\eqref{eq:OptmizedProblem_pwr_transmit1}, we can infer that $T^\mathrm{ up}_k$ is bounded and every k-\textit{th} worker can solve the following sub-optimization problem: 
 \begin{subequations}
\label{eq:OptmizedsubProblem2}
\begin{align}
	\textbf{Sub$-P2$:} \quad   \underset{T^\mathrm{ up}_k} \min \quad & 
    E_k^{cmp} + E^\mathrm{ up}_{k} \label{eq:OptmizedSubProblema2} 
\\
\text{s.t.: }  
\textrm{Eq.}\eqref{eq:OptmizedProblem_pwr_transmit1}\nonumber
\\
  &  \nonumber
\end{align}
\end{subequations}
  Then, the Golden-section search method ~\cite{William2007,tran2020lightwave} is employed to find the optimal value of $T^\mathrm{ up}_k$~\cite{William2007}  { as it needs fewer function calls.} 
  Subsequently, $P^{ up}_k, T^\mathrm{ cmp}_k,$ and $f^\mathrm{cmp}_k,$ are solved using their derived closed-forms. 
  All these steps are presented in Algorithm~\ref{alg:Golden}. 
  In steps 1-3,  Algorithm~\ref{alg:Golden} is initialized by defining the Golden ratio  $\varphi = \frac{3-\sqrt{5}}{2}$, $\epsilon$ and the number of iterations $\tau$. 
  The lower and upper bounds are determined by $a_0 = \mathbf T - \frac{\rho}{f^\mathrm{min}_k} $, and $b_0 =  \mathbf T - \frac{\rho}{f^\mathrm{max}_k}$ based on \eqref{eq:OptmizedProblem_pwr_transmit1}. 
  Then, in steps 4-15, the transmission time that minimizes energy consumption is found by iteratively shrinking the intervals. 
  At each iteration, steps 6-13, the updated interval is performed by either by reducing the left interval $a_{i+1} = a_i + \varphi (b_i - a_i)$, or by reducing the right interval $b_{i+1} = a_i + (1-\varphi) (b_i - a_i)$ in which the local minimum occurs. 
  We can note that steps 4-15 are repeated until a sufficient small interval is obtained. 
  In steps 16-19, the resultant solution for $T^\mathrm{ up}_k$ is found. This drives to attaining the optimal transmit power $P^{ up}_k$, computation time $T^\mathrm{cmp}_k$ and the optimal local CPU speed $f^\mathrm{cmp}_k$, as in steps 20-22. 
 It is worth noting that $f^\mathrm{cmp}_k$ is estimated as:
\begin{align}\label{eq:local_cpu_updated}
f^\mathrm{cmp}_k = \frac{\rho}{T^\mathrm{cmp}_k}
\end{align}

\begin{algorithm}[t]
\begin{footnotesize}
\caption{Energy Minimization}
\label{alg:Golden}
 \KwIn{$\rho = (\varepsilon \phi |\mathcal{D}_{k}|) - \kappa (\varepsilon-1) $, $\mathbf T$, ${f^\mathrm{max}_k}$, and ${f^\mathrm{min}_k}$.}
 \KwOut{$T^\mathrm{ up}_k$, $P^\mathrm{ up}_k$,  $T^\mathrm{cmp}_k$, and ${f^\mathrm{cmp}_k}$}
  \textbf{Initialize} $\varphi = \frac{3-\sqrt{5}}{2}$, $a_0 = \mathbf T - \frac{\rho}{f^\mathrm{min}_k} $, $ b_0 =  \mathbf T - \frac{\rho}{f^\mathrm{max}_k}$, $\epsilon = 10^{-6}$, $\tau= 1000$, $ i= 0$, $ T^{up}_{temp2} = a_0 +(1-\varphi)*(b_0-a_0) $,
        $T^{up}_{temp2} = a_0 +\varphi *(b_0-a_0)$\;
        Compute  $E(T^{up}_{temp1}) $using \eqref{eq:energyUploading2}  \;         
        Compute $ E(T^{up}_{temp2})$ using \eqref{eq:energyUploading2}\;
        \While{ ($(|b_i-a_i|)>\epsilon$) $\&$ ($i< \tau$))}{
        $ i=i+1$\;
            \eIf{($E(T^{up}_{temp1}) < E(T^{up}_{temp2}))$}{
                $b_i=T^{up}_{temp2}$ \;
              $  T^{up}_{temp2} = T^{up}_{temp1}$\;
                $T^{up}_{temp1}= a_i+(1-\varphi)*(b_i-a_i)$\;
 }
            {
                $a_i=T^{up}_{temp1}$ \;
                $T^{up}_{temp1}=T^{up}_{temp2}$ \;
                $T^{up}_{temp2}=a_i+\varphi *(b_i-a_i)$\;
                
}   
                Compute  $E(T^{up}_{temp1})$ using \eqref{eq:energyUploading2}  \;
                Compute   $E(T^{up}_{temp2})$ using \eqref{eq:energyUploading2}\;     

      }  
        \eIf{($E(T^{up}_{temp1}$) $<$ $E(T^{up}_{temp2})$)}{
            $T^\mathrm{ up}_k = T^{up}_{temp1}$
}
      {
          $  T^\mathrm{ up}_k = T^{up}_{temp2}$
}
Find $P^{ up}_k$ using \eqref{eq:local_transmit_power}  \;
$T^\mathrm{cmp}_k = \mathbf T - T^\mathrm{ up}_k$\;
Find $f^\mathrm{cmp}_k$ using \eqref{eq:local_cpu_updated}\;
\end{footnotesize}
\end{algorithm}

\subsection{Proposed Energy-Efficient FEEL Approach}
\label{summayalg}
This section combines the comprised algorithms and describes the FEEL training algorithm performed by the server and workers as summarized in Algorithm~\ref{alg:synch_learning}. 
In step 1 of Algorithm~\ref{alg:synch_learning} the server initiates the global model parameters, learning rate, and the number of local iterations. 
It also determines the threshold probability that all selected workers use to choose the included samples in the $r$-\textit{th} FEEL global round. 
In step 2, the server collects prior information such as data size, battery level, channel state from possible clients willing to participate.
In steps 4-5 of Algorithm~\ref{alg:synch_learning}, the server specifies the FEEL round deadline, selects the workers, and broadcasts the global model parameters for local updates. 
In step 7, all selected workers receive the global parameters from the server. 
In step 8, each $k$-\textit{th} worker updates the global model for only one epoch by invoking steps 1-5 in the proposed local training Algorithm (Alg. \ref{alg:Proposedalg2}) to utilize only the samples that have more impacts on the updates. 
In step 9, each client finds the optimal beamforming weight by solving ~\eqref{eq:beam}. 
As a consequence, in step 10, each worker runs Algorithm\ref{alg:Golden} to find computation and communication time and the associated local processing speed and transmission power that conserves the energy consumption. 
In step 11, each $k$-\textit{th} worker invokes steps 6-7 in the proposed local training Algorithm (i.e., Alg. \ref{alg:Proposedalg2}) for the rest of epochs then uploads its update to the server, step 12. 
Last, in step 13, the server aggregates all updates to form the global model. Steps (4-13) are repeated for $R$ FEEL rounds.
\begin{algorithm}[t]
\footnotesize
    \caption{Energy-Efficient FEEL}  \label{alg:synch_learning}
    \LinesNumbered
    \KwIn{available workers $K$, model size $\xi$, total bandwidth $B$}
    \KwOut{Global Model $\theta$}
    \textbf{Initialize}
    model parameters $\mathbf{\theta}_0$, learning rate $\eta$, number of epoch $\varepsilon$, threshold probability $\vartheta$, and number of FEEL rounds $R$\;
  Server collects prior information (e.g, $|\mathcal{D}_{k}|$, $ {\mathbf h}_k$, ${E}_{k}$), from available workers $K$.\;
 \For{$r=1$ to $R$}{
 Server sets the FEEL round deadline;
 Server Selects a subset of devices to take part in global model training.\;
 Server broadcast the model parameters $\mathbf{\theta}_{r-1}$ to all selected workers\;
 \For{Each selected worker $k \in K$ in parallel}{
 Worker $k$ Receives $\mathbf{\theta}_{r-1}$\;
 Worker $k$ uses \textbf{Algorithm~\ref{alg:Proposedalg2}} to filter the local samples\;
 Worker $k$ finds ${\mathbf w}_k$ as defined in ~\eqref{eq:beam}\;
 Worker $k$ Finds $T^\mathrm{ up}_k$, $P^\mathrm{ up}_k$,  $T^\mathrm{cmp}_k$, and ${f^\mathrm{cmp}_k}$ using \textbf{Algorithm \ref{alg:Golden}}\;
  Worker $k$ updates $\mathbf{\theta}_{\textbf{1}}^k$ for $E - 1$ epochs\;
  Worker $k$ sends $\mathbf{\theta}_{\textbf{r}}^k$ to the server\;
 }
  The server aggregates and fuse all models 
 }
\end{algorithm}

\section{Simulation and Numerical Results}
\label{sec:experiment}
In this section, we present the performance evaluation of proposed algorithms under FEEL settings.

\subsection{Experimental Setup}
Unless otherwise specified, we consider a FEEL environment as in Fig.~\ref{fig:sysmodel} with a bandwidth $B=10 \text{MHz}$, and background noise power is set as $\sigma^2 = 10^{-8}$. 
The distance between the edge workers and the edge server is uniformly distributed between $25 m$ and $100 m$. 
For the wireless channel model, we use Rician distribution with a Rician factor of $8$ dB, and the path loss exponent factor is $3.2$. The number of antennas is $m = 4$ for the edge server and $m = 1$ for every $k$-\textit{th} worker.
We use $P_{max} = 20$ dBm and $P_{min} = −10$ dBm, for maximum and minimum transmission power, respectively. 
A minimum and maximum CPU frequencies are set to $1$ GHz and $9$ GHz, respectively.  The simulation parameters are summarized in Tabel~\ref{tab:setuppar}. 
\begin{table}[h]
	\caption{\uppercase{ Simulation settings}}\label{parameter}
	\centering
	\begin{tabular}{|c|p{4.5cm}|}%
		\hline
		\textbf{Parameter}&\textbf{Value}\\
		\hline
		Bandwidth & 10 Mhz\\
		 Transmission Power & $P_{max} = 20$ $P_{min} = −10$ dBm \\
        Spectral Density Power & $10^{-8}$ \\
		 CPU Frequency & [1, 9] Ghz\\
        Cycles Per Sample & 20 cycle/sample\\
        Capacitance Coefficient & $2\times 10^{-28}$\\
		Model Size & 2.2 MB for MNIST, 4.7 MB for CIFAR-10\\
		Learning rate & 0.001\\
		\hline
	\end{tabular}
\label{tab:setuppar}
\end{table}

For comparison, we use the baseline FEEL Algorithm and the optimization method as in \cite{yang2020energy,mo2020energy} where all data samples are included for local training, and the updates are not synchronized. 
We use the MNIST and CIFAR-10 datasets under a realistic federated setting with i.i.d and non-i.i.d data distributions.  {We use feed-forward neural network model for MNIST and convolutional neural networks (CNN) model for CIFAR-10.  }
For the i.i.d, the dataset is randomly partitioned into $K$ pieces that correspond to the workers, and each worker is assigned one part. 
For the non-i.i.d, the data is first partitioned into $C$ parts that correspond to the classification classes.
Each part is further partitioned into different shards; then, each worker is assigned only $2$ classes. 
For each worker, the local data is split into $80\%$ for training and $20\%$ for testing. 
We utilize the mini-batch SGD as a local solver with a batch size $b=20$ and learning rate $\eta = 0.001$ and evaluate the global model every FEEL round.  {For all experiments, the results are collected and averaged over five trials.}  

\subsection{Performance Evaluation}
We evaluate the performance of the proposed approach in terms of local energy consumption, global training loss, global training accuracy, and the percentage of excluded data samples. 
Then, we show the effects of the number of workers on performance. 
\subsubsection{Impacts of The Proposed Approach on The Energy Reduction}
We conduct extensive experiments to assess the efficacy of the proposed approach in conserving the expended energy compared to benchmark approaches. 

\Cref{F:EnergyConsumption200_MNIST1,F:EnergyConsumption200_MNIST2} illustrate the cumulative and instantaneous energy consumption during the FEEL global training rounds when the learning task is performed on the MNIST dataset under non-i.i.d data distribution. 
It is observed that the proposed approach shows a significant reduction in energy consumption compared to the baselines. 
This gain is proportional to the number of FEEL global rounds. 
This is due to the fact that when more samples are injected into training at the beginning, the model needs more FEEL rounds to capture local patterns and specialize the global model. 
The model then can predict some samples with high confidence in the later FEEL rounds and then excludes the samples having less impact on the model quality for the remaining $(\varepsilon-1)$ epochs. 
This procedure reduces the time and energy needed for training, assigning more time for uploading using lower transmission power. 
Besides, it is worth noting that as the value of  $\vartheta$ decreases, more energy gains are obtained thanks to the increasing number of excluded samples. 
\begin{figure*}[!t]
\centering
	     \begin{subfigure}[b]{0.45\textwidth}
         \centering
        \includegraphics[width=\textwidth] {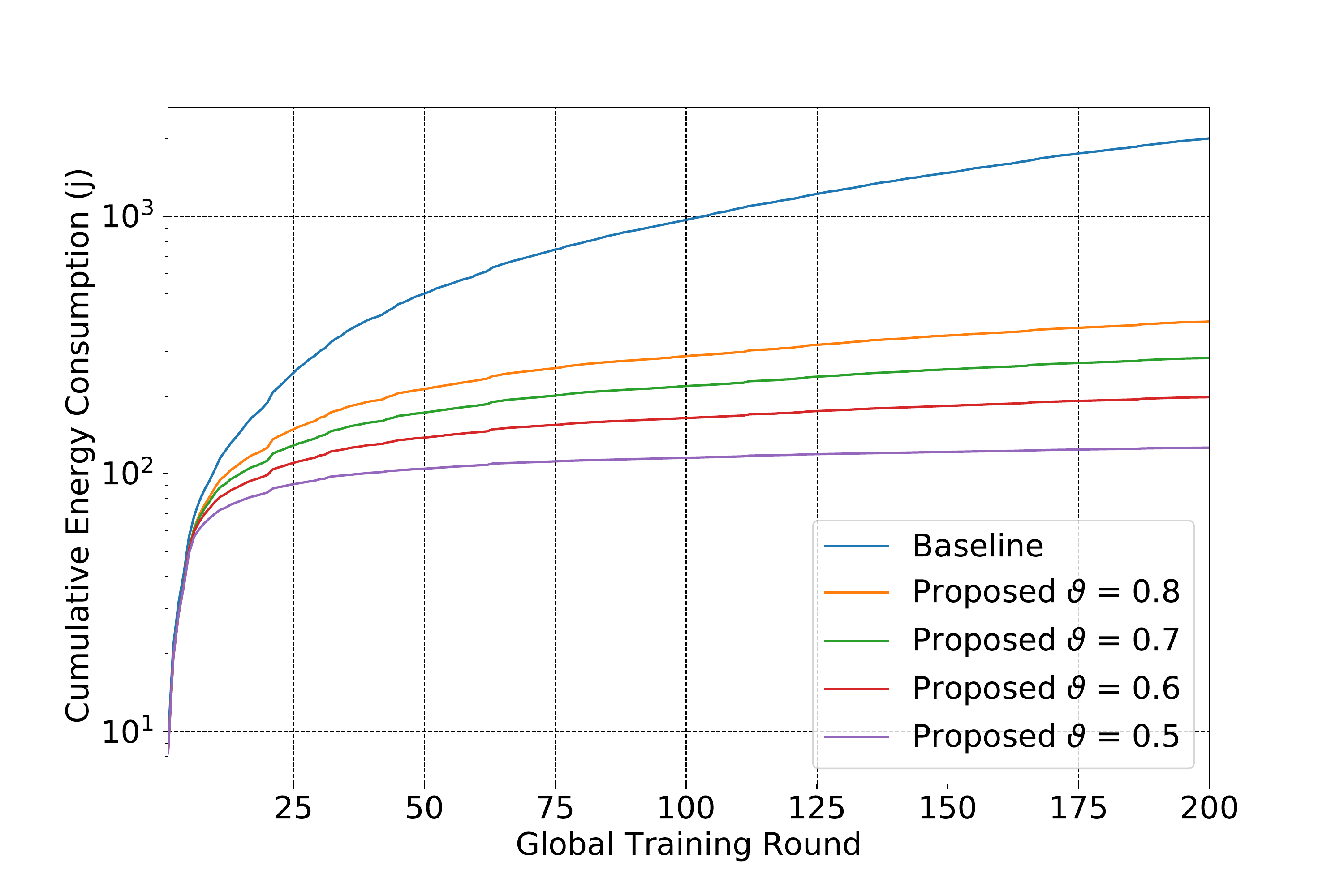}
         \caption{Cumulative Energy Consumption vs FEEL Global Round.}
         \label{F:EnergyConsumption200_MNIST1}
     \end{subfigure}
     \hfill
	     \begin{subfigure}[b]{0.45\textwidth}
         \centering
        \includegraphics[width=\textwidth] {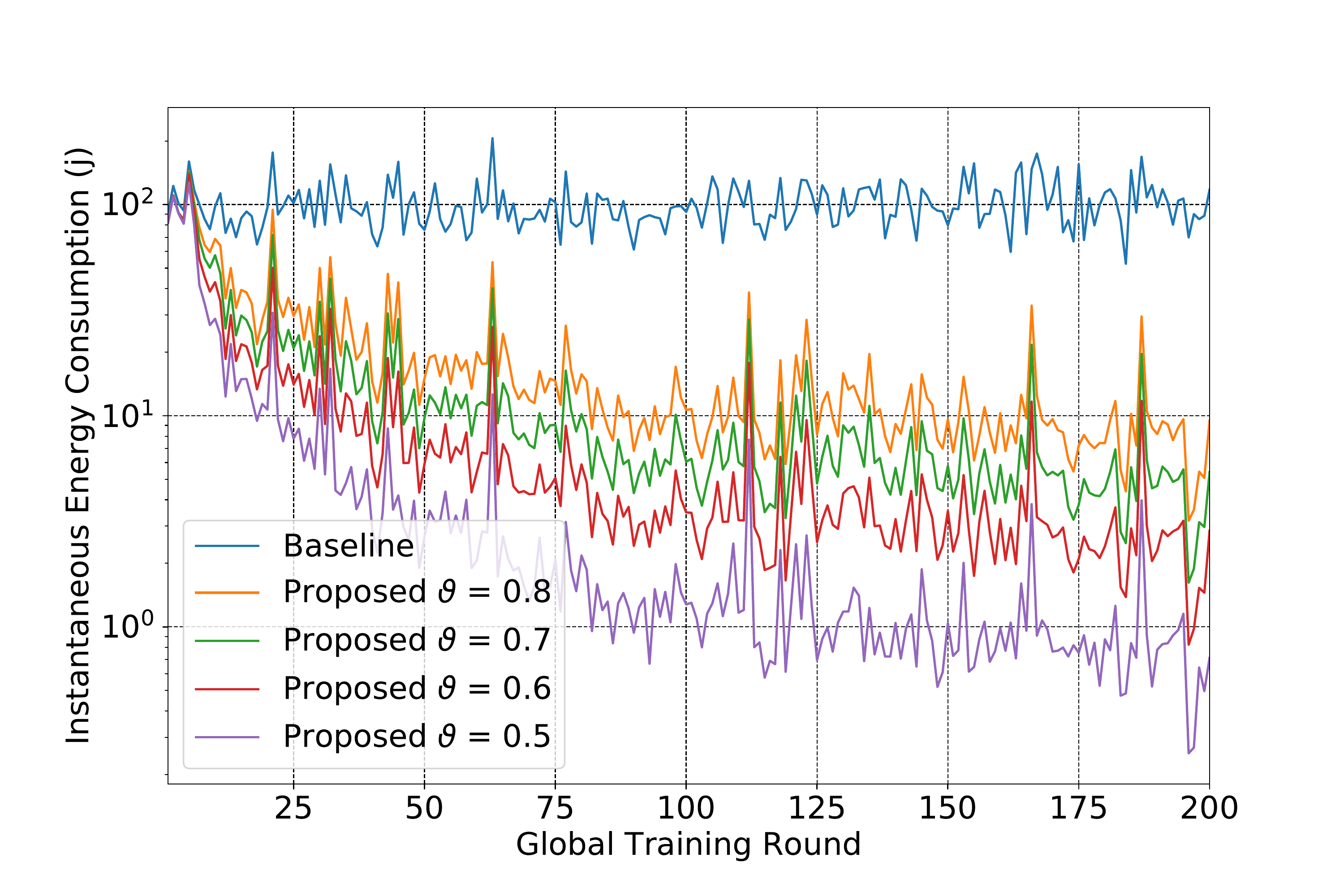}
         \caption{Instantaneous Energy Consumption vs FEEL Global Round.}
         \label{F:EnergyConsumption200_MNIST2}
     \end{subfigure}
     \hfill     
	\caption{Cumulative and Instantaneous Energy Consumption when  10\% of K  is selected and FEEL Global Rounds is $200$ (Non-i.i.d, MNIST).}
		\label{F:EnergyConsumption200_MNIST_cum_ins}
\end{figure*}

\Cref{F:EnergyConsumption200_CIFAR101,F:EnergyConsumption200_CIFAR102} show the performance of the proposed approach in terms of cumulative and instantaneous energy consumption using CIFAR-10 under non-i.i.d data distribution. 
We use this scenario to showcase the performance of the proposed approach in conjunction with more complex learning tasks. 
We can note that, in general, training the model using CIFAR-10 consumes more energy compared to the MNIST. 
Nevertheless, the proposed approach still provides significant performance gain, and it conserves substantial energy even through the dataset and corresponding model are complex. 
This stems from the ability of the global model to exclude less important data samples leading to a reduction in the computation time and energy while providing more flexibility to optimize the transmission energy as more time is assigned for transmission.
 \begin{figure*}[!t]
\centering
	     \begin{subfigure}[b]{0.45\textwidth}
         \centering
        \includegraphics[width=\textwidth] {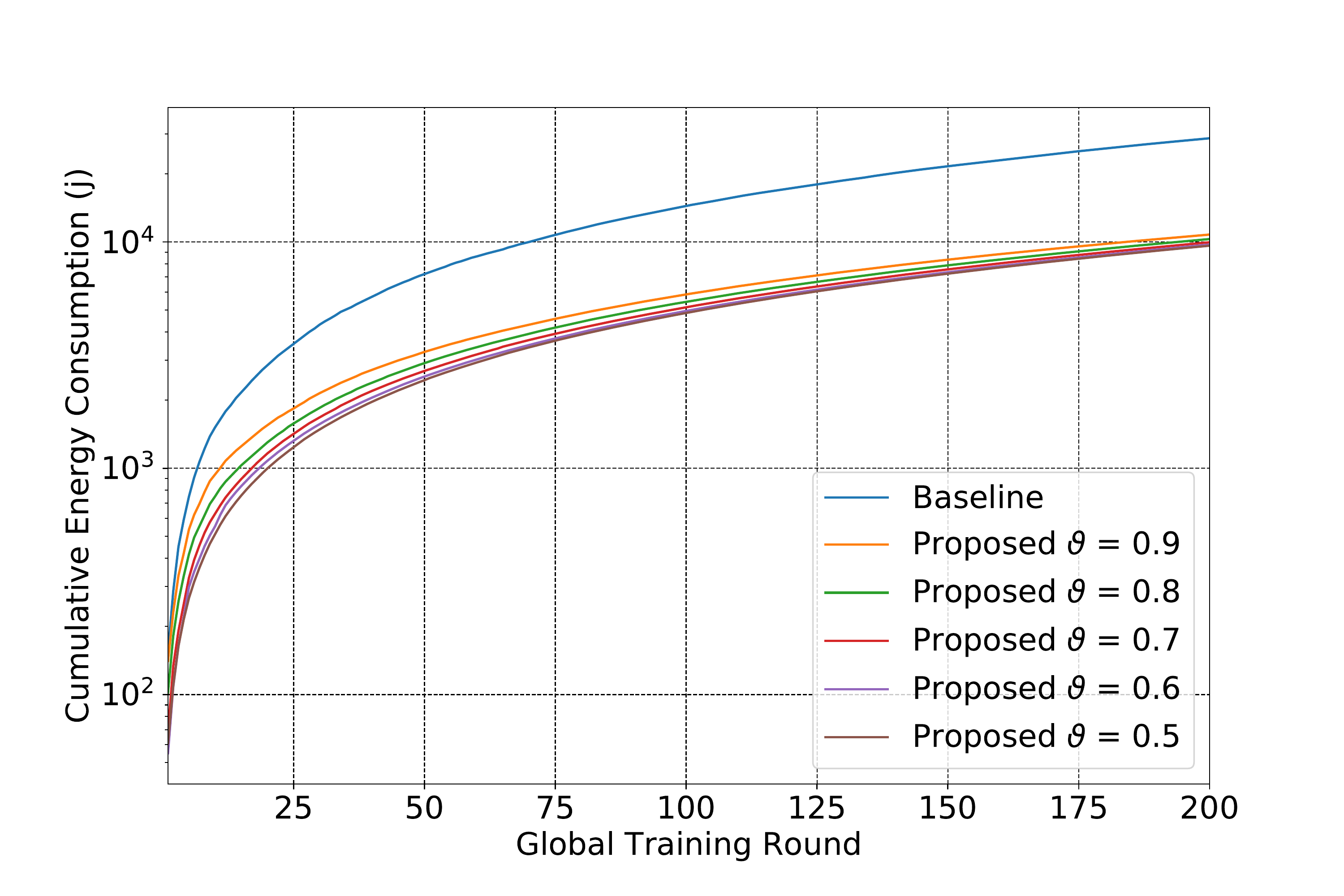}
         \caption{Cumulative Energy Consumption vs FEEL Global Round.}
         \label{F:EnergyConsumption200_CIFAR101}
     \end{subfigure}
     \hfill
	     \begin{subfigure}[b]{0.45\textwidth}
         \centering
        \includegraphics[width=\textwidth] {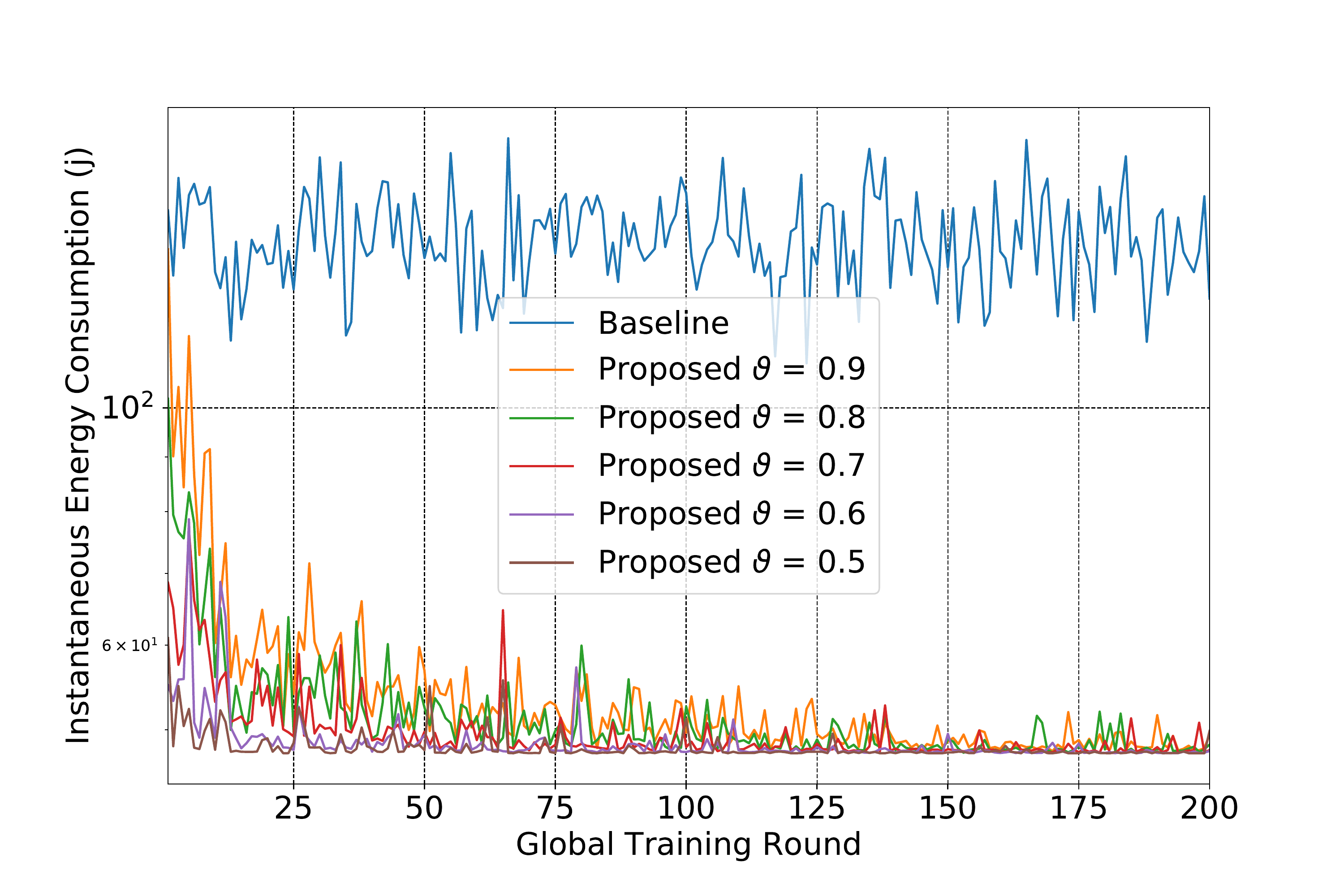}
         \caption{Instantaneous Energy Consumption vs FEEL Global Round.}
         \label{F:EnergyConsumption200_CIFAR102}
     \end{subfigure}
     \hfill     
	\caption{Cumulative and Instantaneous Energy Consumption when  10\% of K  is selected and FEEL Global Rounds is $200$ (Non-i.i.d, CIFAR-10).}
		\label{F:EnergyConsumption200_CIFAR_cum_ins}
\end{figure*}

Further, \Cref{F:EnergyConsumption200_CIFAR101_iid,F:EnergyConsumption200_CIFAR102_iid} display the performance of the proposed approach under i.i.d data distribution. 
From these figures, it is concluded that the proposed approach still achieves better performance gains compared to the baselines if the data is i.i.d. The resulting gain stems from capturing the data patterns during FEEL rounds, indicating that it is effective to use data filtering even when the data is i.i.id.

\begin{figure*}[!t]
\centering
	     \begin{subfigure}[b]{0.45\textwidth}
         \centering
        \includegraphics[width=\textwidth] {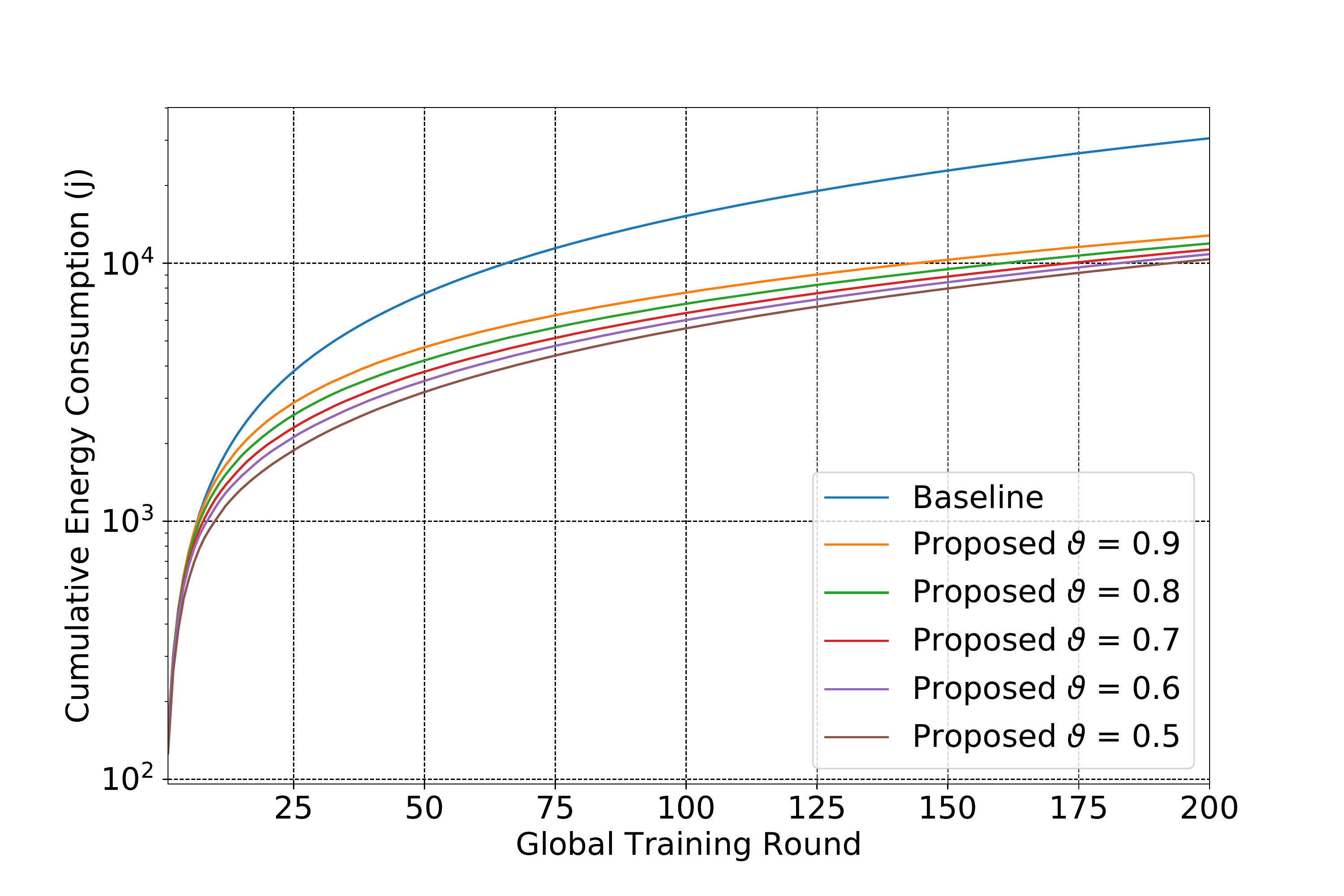}
         \caption{Cumulative Energy Consumption vs FEEL GlobaL Round.}
         \label{F:EnergyConsumption200_CIFAR101_iid}
     \end{subfigure}
     \hfill
	     \begin{subfigure}[b]{0.45\textwidth}
         \centering
        \includegraphics[width=\textwidth] {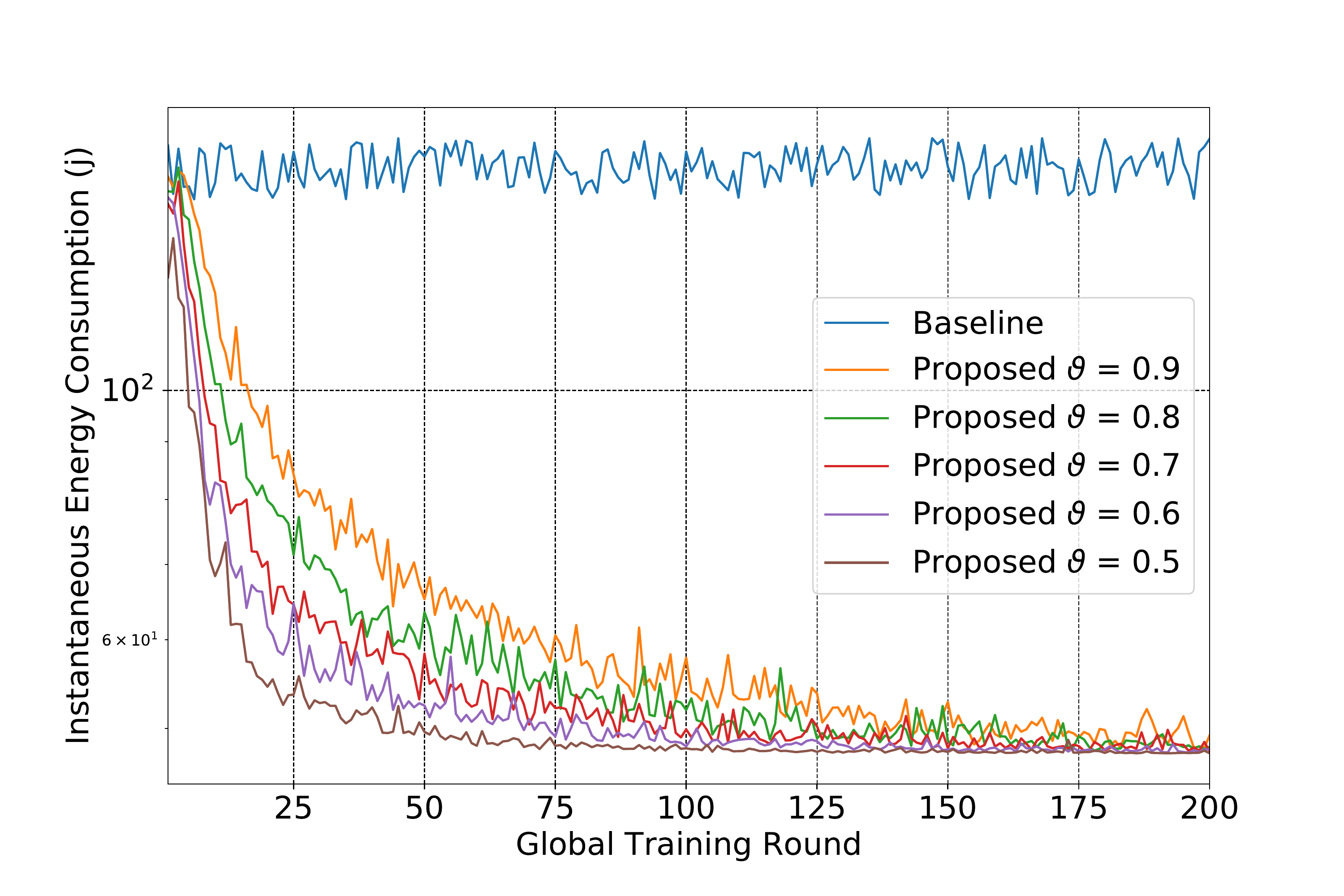}
         \caption{Instantaneous Energy Consumption vs FEEL Global Round.}
         \label{F:EnergyConsumption200_CIFAR102_iid}
     \end{subfigure}
     \hfill     
	\caption{Cumulative and Instantaneous Energy Consumption when  10\% of K  is selected and FEEL Global Rounds is $200$ (i.i.d, CIFAR-10).}
		\label{F:EnergyConsumption200_CIFAR_cum_ins_iid}
\end{figure*}

\subsubsection{Impacts of The Proposed Approach on The Testing Loss and Accuracy} 

\Cref{F:MNIST_loss200,F:MNIST_accu200} show the identification accuracy and loss of handwritten digits (MNIST) when the number of FEEL global rounds is $200$ and $\vartheta =  0.5, 0.6, 0.7,$ and $0.8$. 
From these figures, the testing accuracy is not negatively affected when using our proposed training approach. It is worth noting that excluding samples that can be predicted with high probability does not affect the performance as  similar accuracy and loss are still achieved, especially in scenarios that have $\vartheta > 0.70$. 

\begin{figure*}[!t]
\centering
	     \begin{subfigure}[b]{0.45\textwidth}
         \centering
        \includegraphics[width=\textwidth] {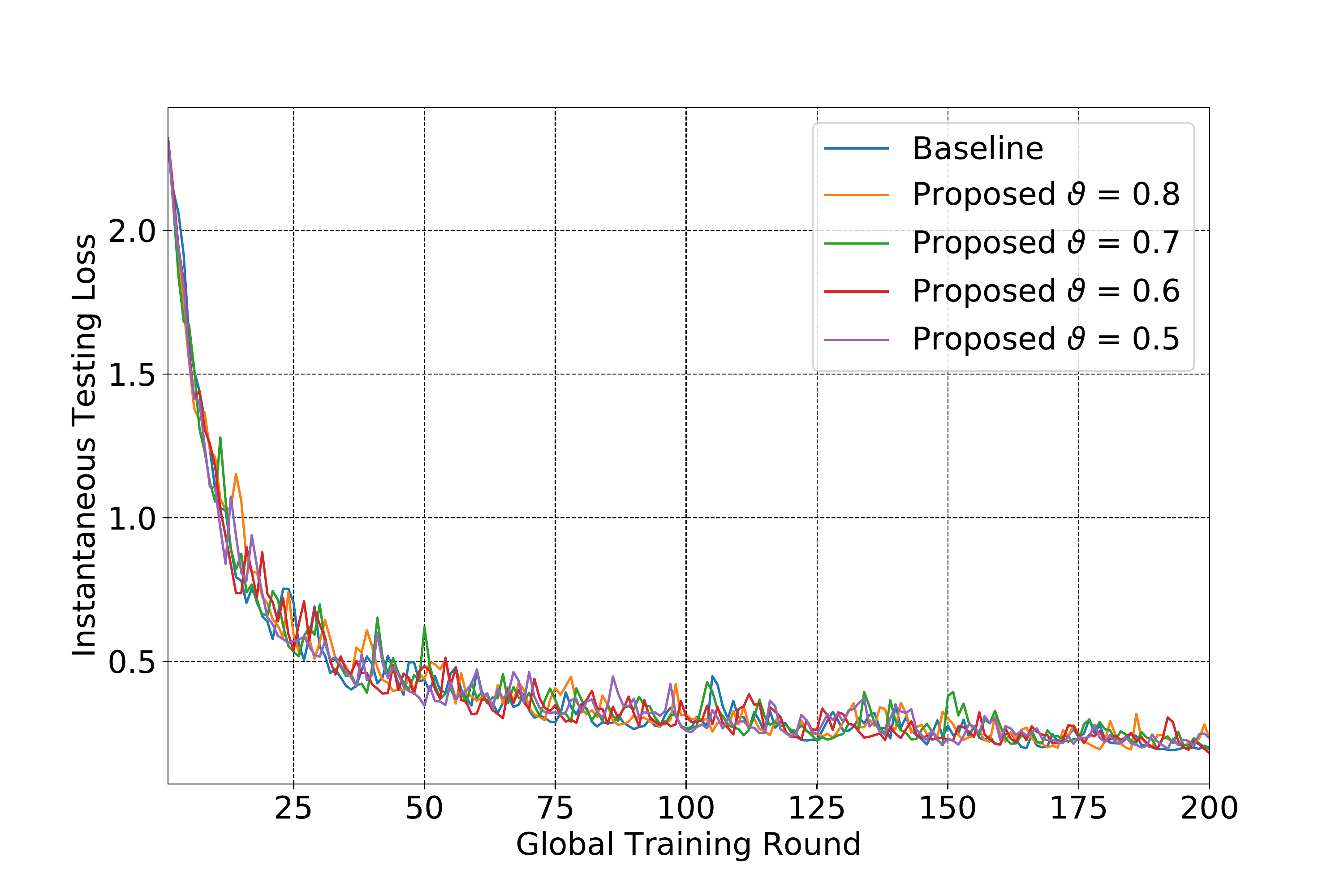}
         \caption{Testing Loss vs FEEL Global Round.}
         \label{F:MNIST_loss200}
     \end{subfigure}
     \hfill
	     \begin{subfigure}[b]{0.45\textwidth}
         \centering
        \includegraphics[width=\textwidth] {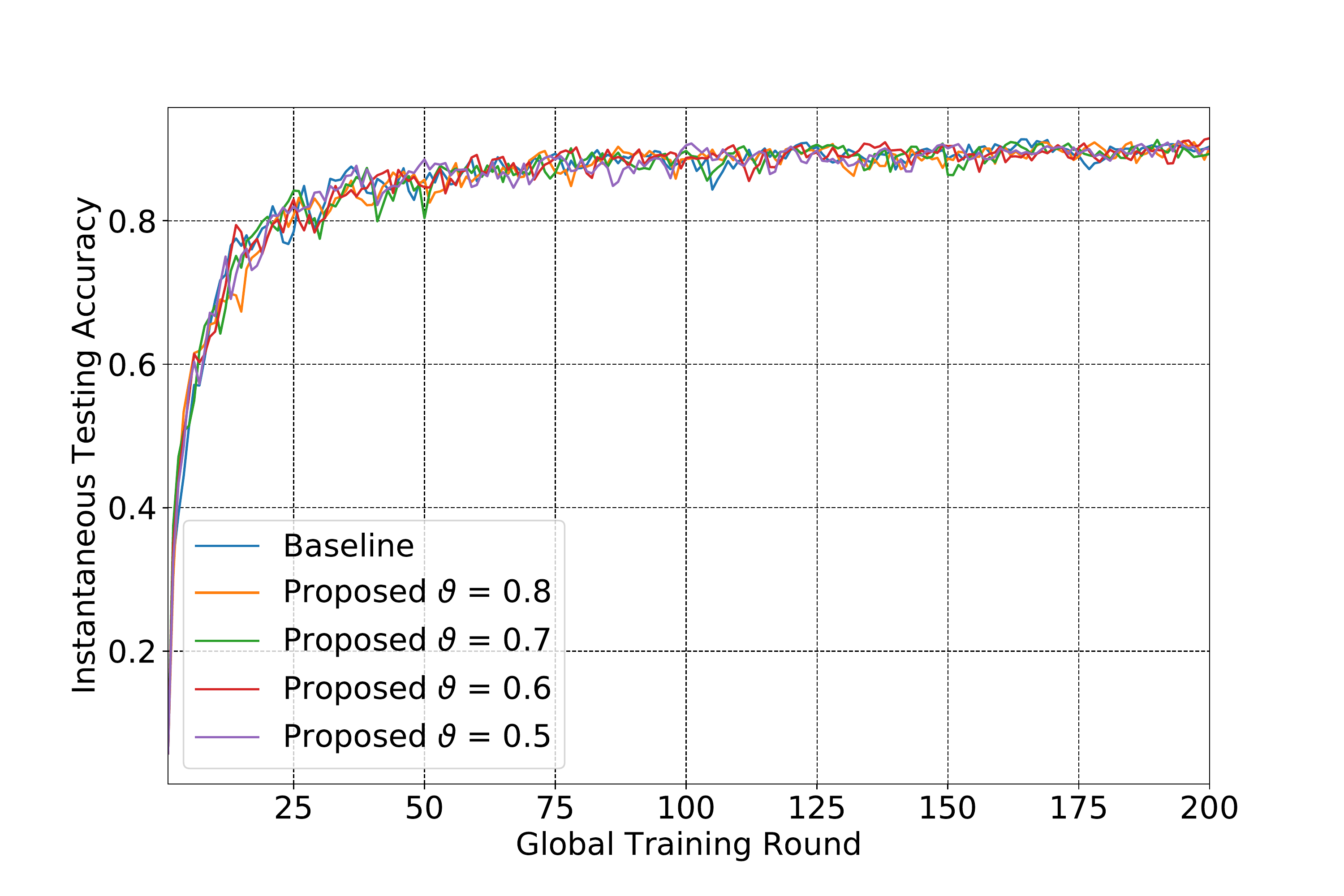}
         \caption{Testing Accuracy vs FEEL Global Round.}
         \label{F:MNIST_accu200}
     \end{subfigure}
     \hfill     
	\caption{Testing Loss and Accuracy when  10\% of K  is selected and FEEL Global Rounds is $200$ (Non-i.i.d, MNIST).}
		\label{F:MNIST_test_loss_noniid}
\end{figure*}

Furthermore, \Cref{F:cifar_loss200,F:cifar_accu200} show the identification accuracy and loss of photo classification   (CIFAR-10) when the number of FEEL global rounds is $200$ and $\vartheta =  0.5, 0.6, 0.7, 0.8$ and $0.9$. 
From both figures, it is evident that the proposed approach provides similar accuracy and loss, especially when the threshold probability is higher than $0.80$. 
However, in contrast to MNIST, both accuracy and loss worsen when the threshold probability is lower than $0.70$, as we can see when $\vartheta = 0.50$. 
This is due to the fact that most of the excluded samples are harder to distinguish because of the limited number of samples used to train the local models for the remaining epochs. 
This indicates that for more complex learning tasks such as CIFAR-10, it is better to choose the threshold probability $\vartheta$ to be closer to $1$. 

\begin{figure*}[!t]
\centering
	     \begin{subfigure}[b]{0.45\textwidth}
         \centering
        \includegraphics[width=\textwidth] {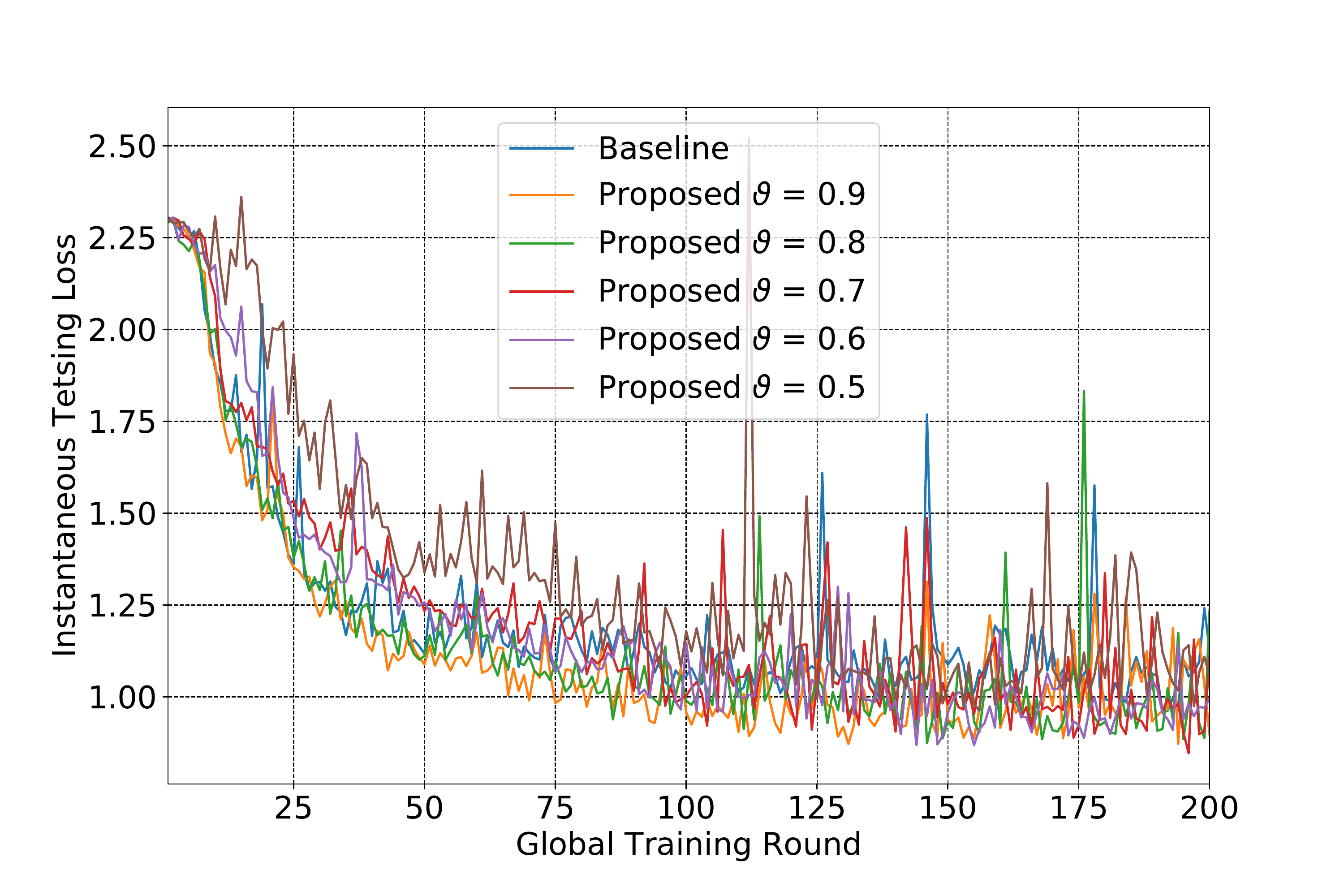}
         \caption{Testing Loss vs FEEL Global Round.}
         \label{F:cifar_loss200}
     \end{subfigure}
     \hfill
	     \begin{subfigure}[b]{0.45\textwidth}
         \centering
        \includegraphics[width=\textwidth] {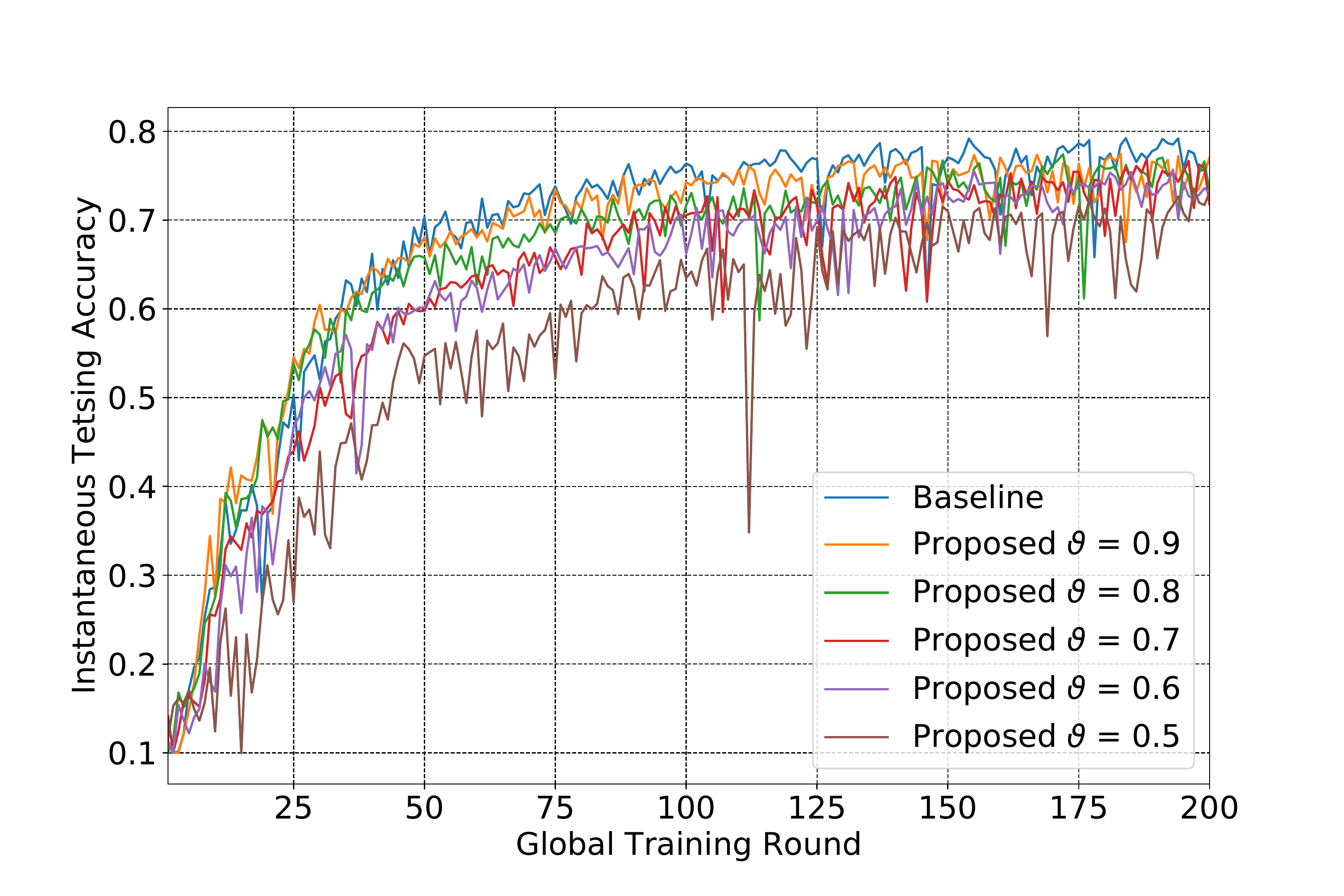}
         \caption{Testing Accuracy vs FEEL Global Round.}
         \label{F:cifar_accu200}
     \end{subfigure}
     \hfill     
	\caption{Testing Loss and Accuracy when  10\% of K  is selected and FEEL Global Rounds is $200$ (Non-i.i.d, CIFAR-10).}
		\label{F:cifar10_test_loss_noniid}
\end{figure*}

For the i.i.d. scenario, as shown in \Cref{F:cifar_loss200_iid,F:cifar_accu200_iid}, the proposed approach provides a faster convergence rate than non-i.i.d data distribution when 10\% of the clients participate in every FEEL round. 
This is due to the fact that the loss function
in i.i.d data distribution is more smooth and more convex than the non-i.i.d. datasets. The achievable accuracy can reach  90\%, which is about 10\% higher than the non-i.i.d. 
In contrast to non-i.i.d, we can observe that all thresholds greater than $50\%$  almost attain similar accuracy and loss while conserving much more energy than the baseline.  
\begin{figure*}[!t]
\centering
	     \begin{subfigure}[b]{0.45\textwidth}
         \centering
        \includegraphics[width=\textwidth] {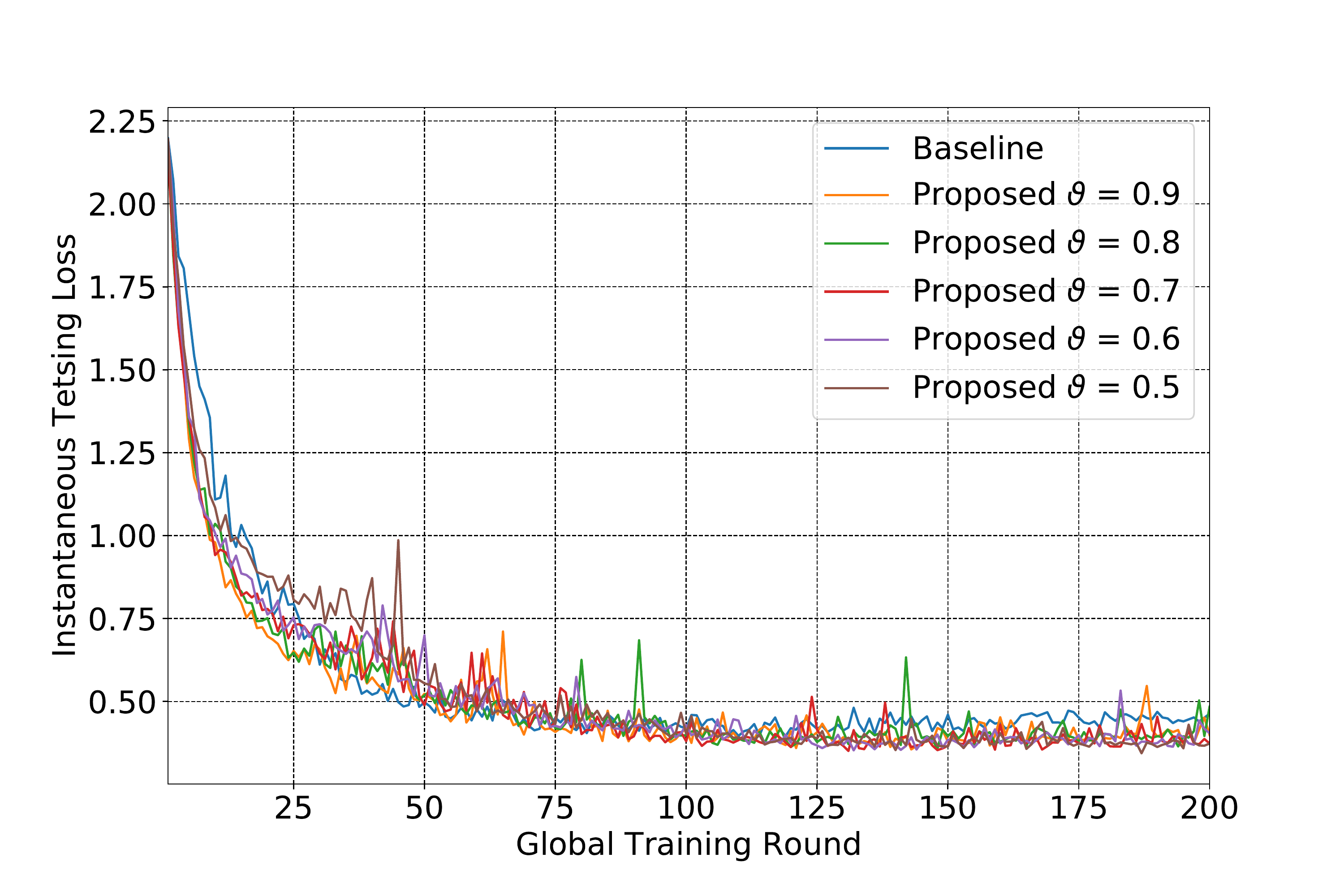}
         \caption{Testing Loss vs FEEL Global Round.}
         \label{F:cifar_loss200_iid}
     \end{subfigure}
     \hfill
	     \begin{subfigure}[b]{0.45\textwidth}
         \centering
        \includegraphics[width=\textwidth] {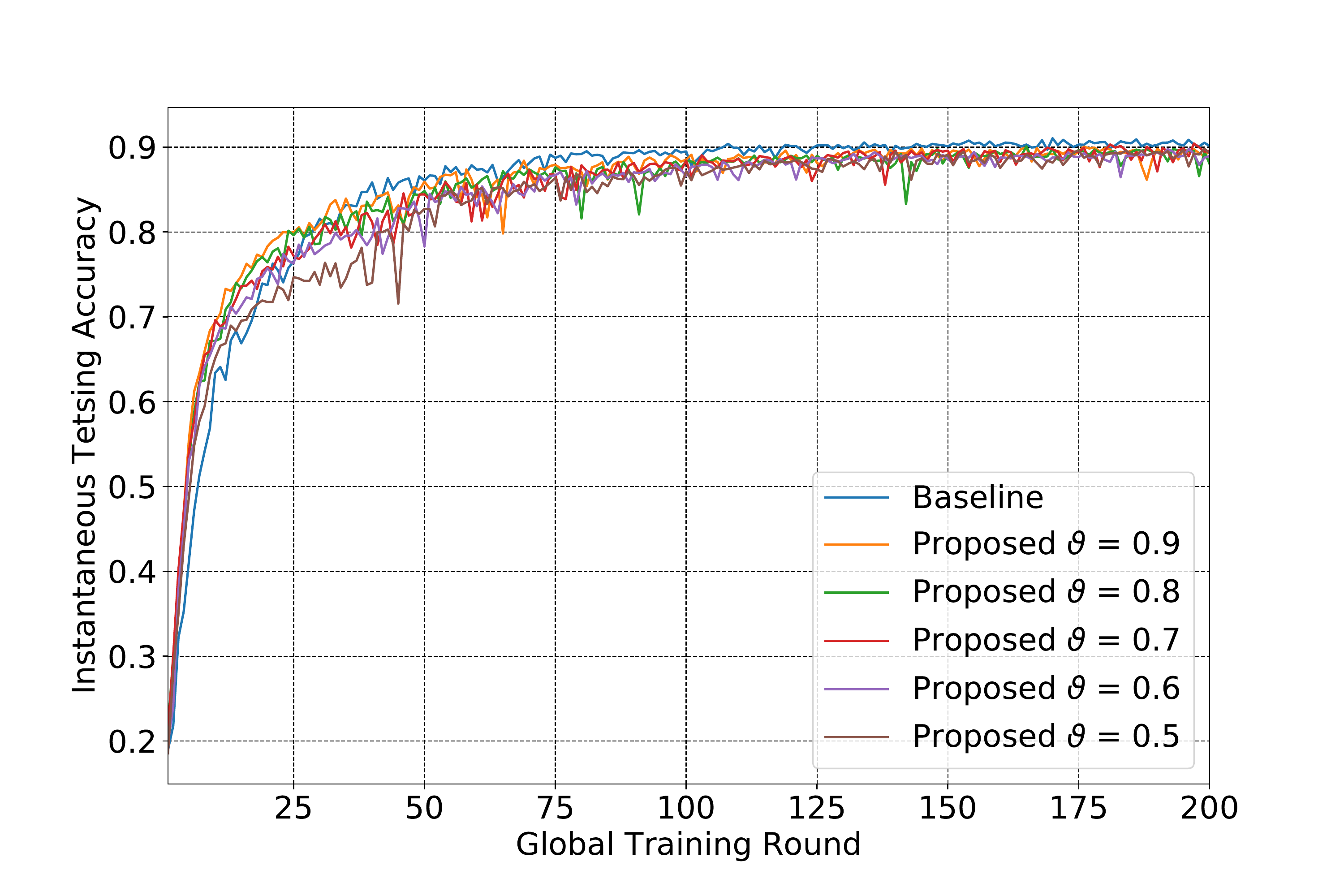}
         \caption{Testing Accuracy vs FEEL Global Round.}
         \label{F:cifar_accu200_iid}
     \end{subfigure}
     \hfill     
	\caption{Testing Loss and Accuracy when  10\% of K  is selected and FEEL Global Rounds is $200$ (i.i.d, CIFAR-10).}
		\label{F:cifar10_test_loss_iid}
\end{figure*}

Overall, our proposed approach provides significant energy efficiency improvements, therefore, encouraging real-life deployments of synchronized edge intelligence while maintaining privacy. 
The performance gain stems from the excluded data samples. These gains can be as high as 90\% of the total samples, as can be observed from our experiments. 
In detail, \Cref{F:samples200_Cifar1,F:samples200_Cifar2} show that at the beginning of the training process, in the initial FEEL rounds,  about $40$\% of the data samples are excluded when  $\vartheta = 0.90$. 
Nevertheless, this percentage increases over time, and it can reach $90$\% of the original data samples as exhibited in \Cref{F:samples200_Cifar2}.  
\begin{figure*}[!t]
\centering
	     \begin{subfigure}[b]{0.45\textwidth}
         \centering
        \includegraphics[width=\textwidth] {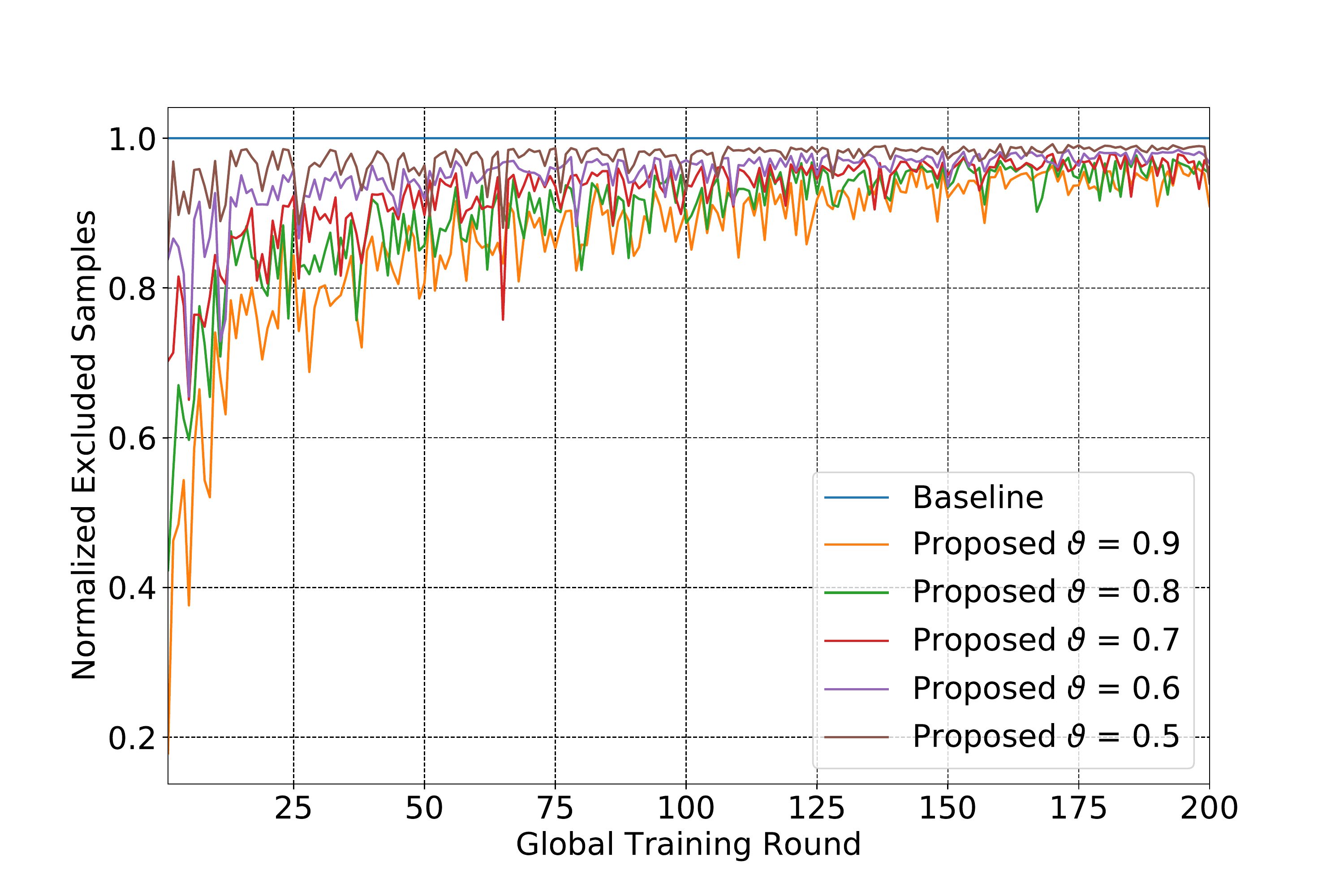}
         \caption{Normalized excluded samples to the baseline vs FEEL Global Round.}
         \label{F:samples200_Cifar1}
     \end{subfigure}
     \hfill
	     \begin{subfigure}[b]{0.45\textwidth}
         \centering
        \includegraphics[width=\textwidth] {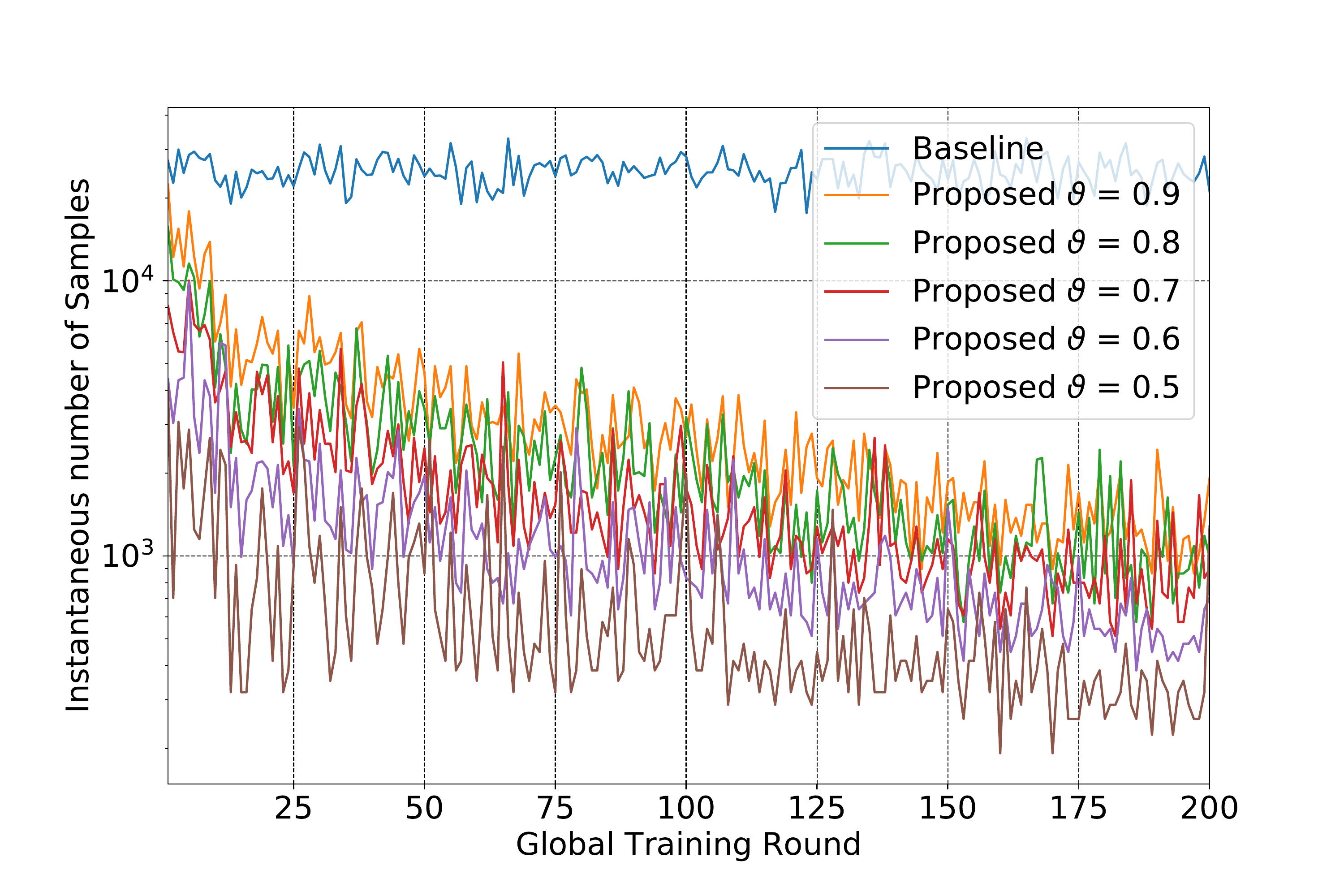}
         \caption{Instantaneous included samples vs FEEL Global Round.}
         \label{F:samples200_Cifar2}
     \end{subfigure}
     \hfill     
	\caption{Normalized excluded samples and Instantaneous included samples when  10\% of K  is selected and FEEL Global Rounds is $200$ (Non-i.i.d, CIFAR-10).}
		\label{F:samples200_Cifar}
\end{figure*}

\subsubsection{Impacts of The Number of Selected Workers on The Performance}
We explore the effects of the number of selected workers on energy consumption and the convergence rate for further analysis. 
We use CIFAR-10 for learning tasks, assuming that only 5\% of users participate in every FEEL training round. \Cref{F:EnergyConsumption200_CIFAR101_u_5,F:EnergyConsumption200_CIFAR102_u_5} illustrate the cumulative and instantaneous energy consumption vis-a-vis FEEL global rounds. It can be noticed that our proposed approach conserves energy regardless of the number of workers, while it is clear that as the number of workers decreases, more energy is consumed per worker. 
This stems from the nature of non-i.i.d data distribution. 
As fewer workers are involved during the training process, the model cannot learn and identify more heterogeneous and diverse data, leading to a slower convergence rate as shown on \Cref{F:cifar_loss200_u_5,F:cifar_accu200_u_5}. 
These figures show the instantaneous results of the testing loss and accuracy when 5\% of the workers participate in the training process. 
It is observed that the higher the threshold probability, the best performance gain compared to the benchmarks regardless of the number of participating workers. 
\begin{figure*}[!t]
\centering
	     \begin{subfigure}[b]{0.45\textwidth}
         \centering
        \includegraphics[width=\textwidth] {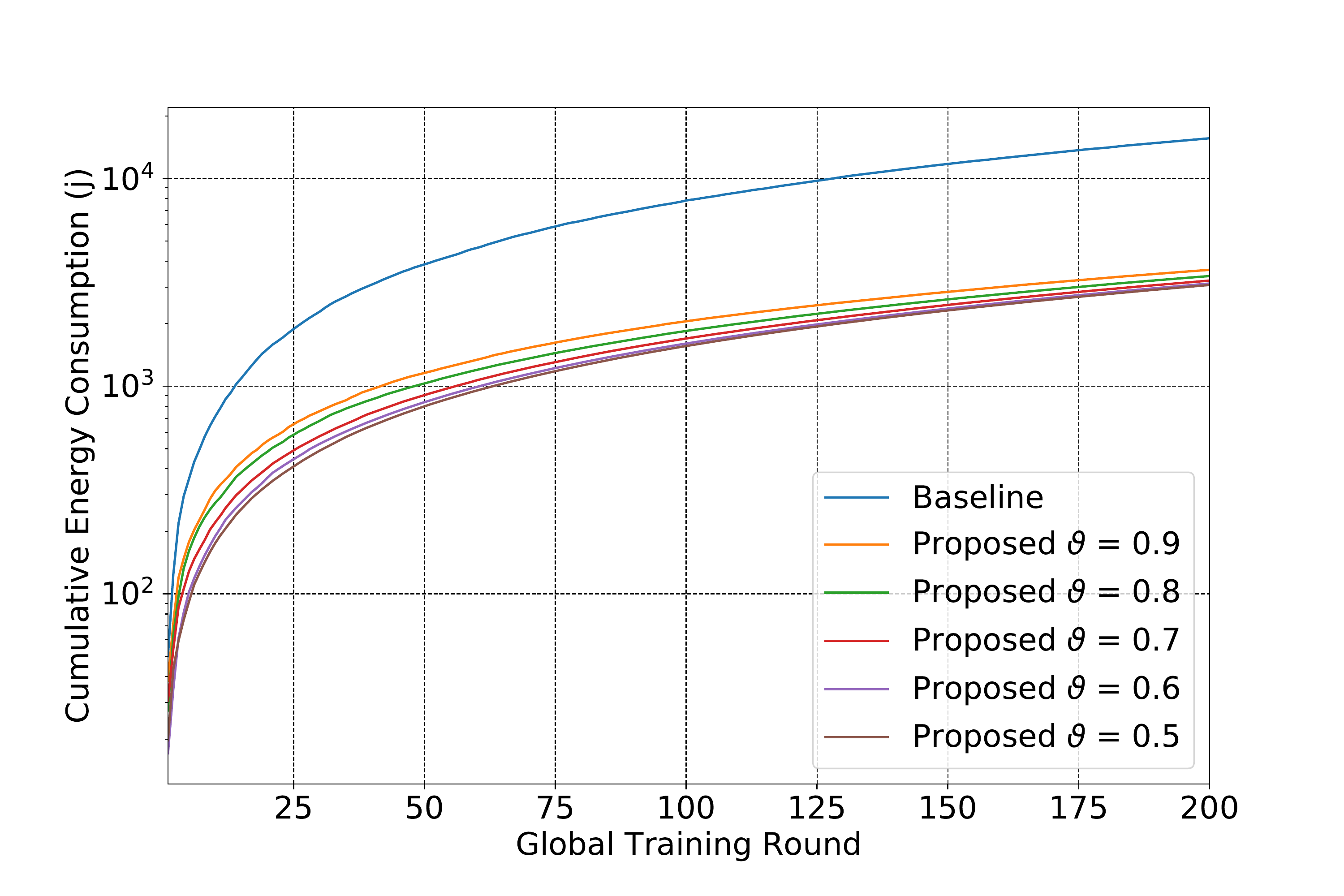}
         \caption{Cumulative Energy Consumption vs FEEL Global Round.}
         \label{F:EnergyConsumption200_CIFAR101_u_5}
     \end{subfigure}
     \hfill
	     \begin{subfigure}[b]{0.45\textwidth}
         \centering
        \includegraphics[width=\textwidth] {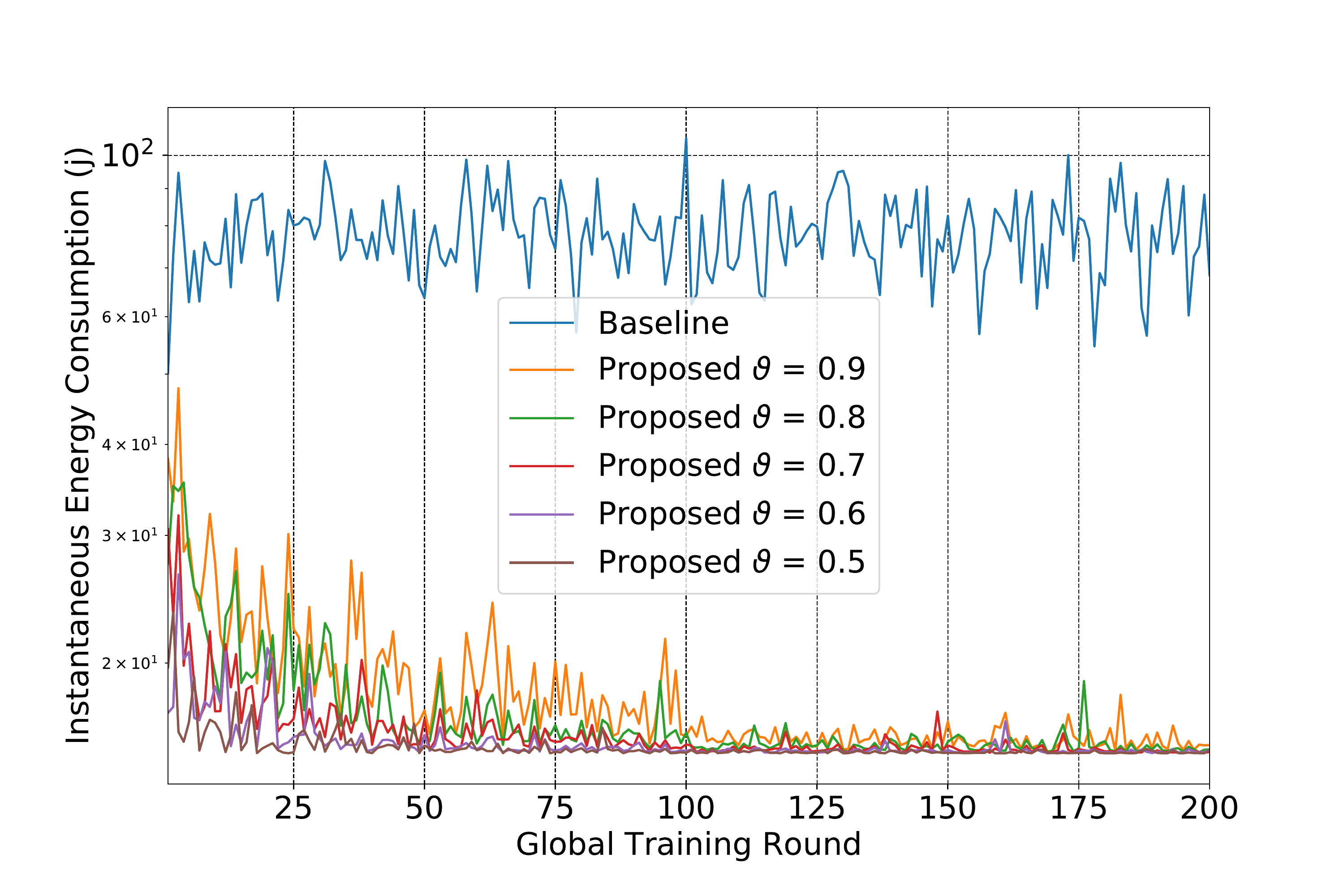}
         \caption{Instantaneous Energy Consumption vs FEEL Global Round.}
         \label{F:EnergyConsumption200_CIFAR102_u_5}
     \end{subfigure}
     \hfill     
	\caption{Cumulative and Instantaneous Energy Consumption when  5\% of K  is selected and FEEL Global Rounds is $200$ (Non-i.i.d, CIFAR-10).}
		\label{F:EnergyConsumption200_CIFAR_cum_ins_u_5}
\end{figure*}

\begin{figure*}[htbp]
\centering
	     \begin{subfigure}[b]{0.45\textwidth}
         \includegraphics[width=\textwidth] {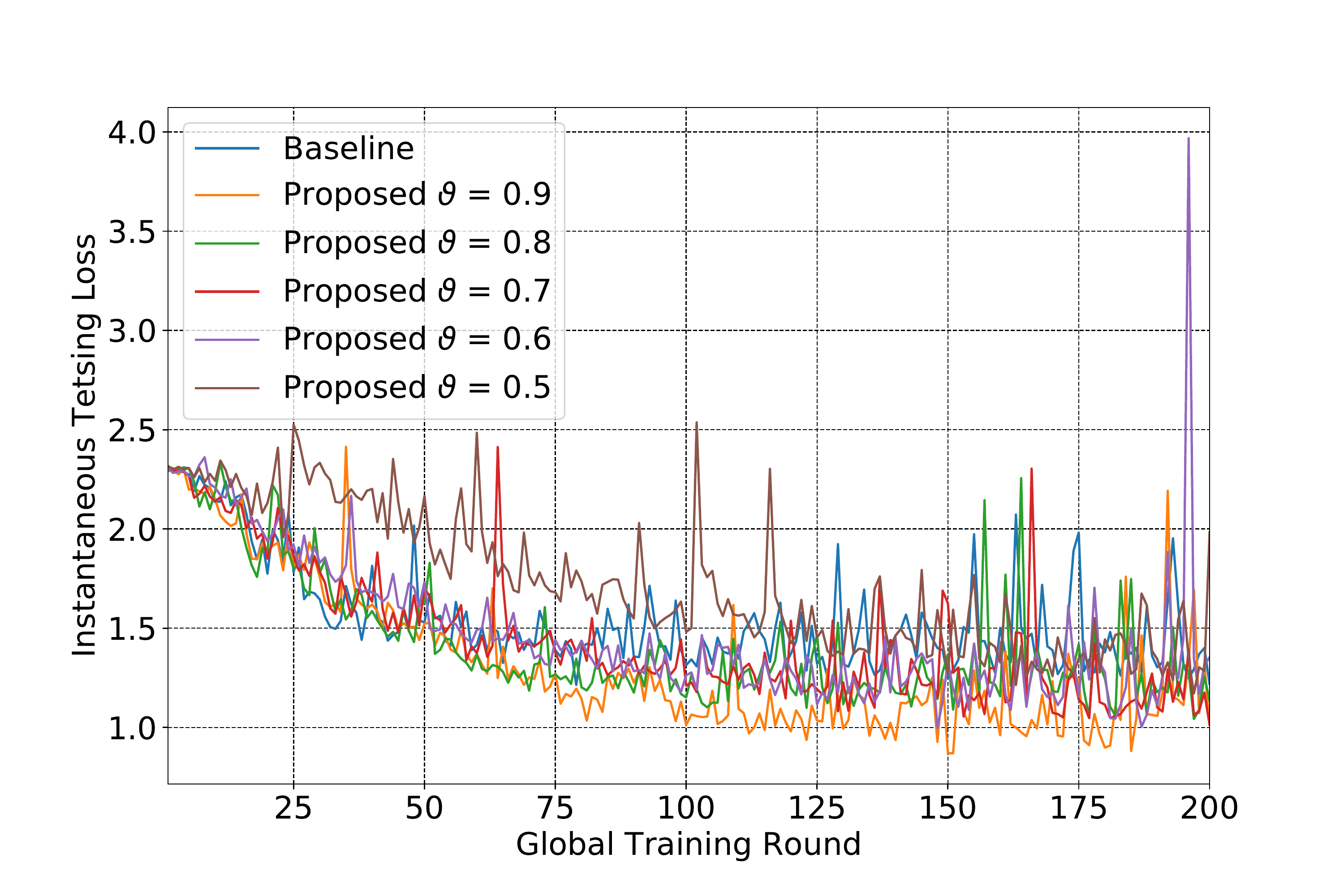}
         \caption{Testing Loss vs FEEL Global Round.}
         \label{F:cifar_loss200_u_5}
     \end{subfigure}
     \hfill
	     \begin{subfigure}[b]{0.45\textwidth}
         \includegraphics[width=\textwidth]{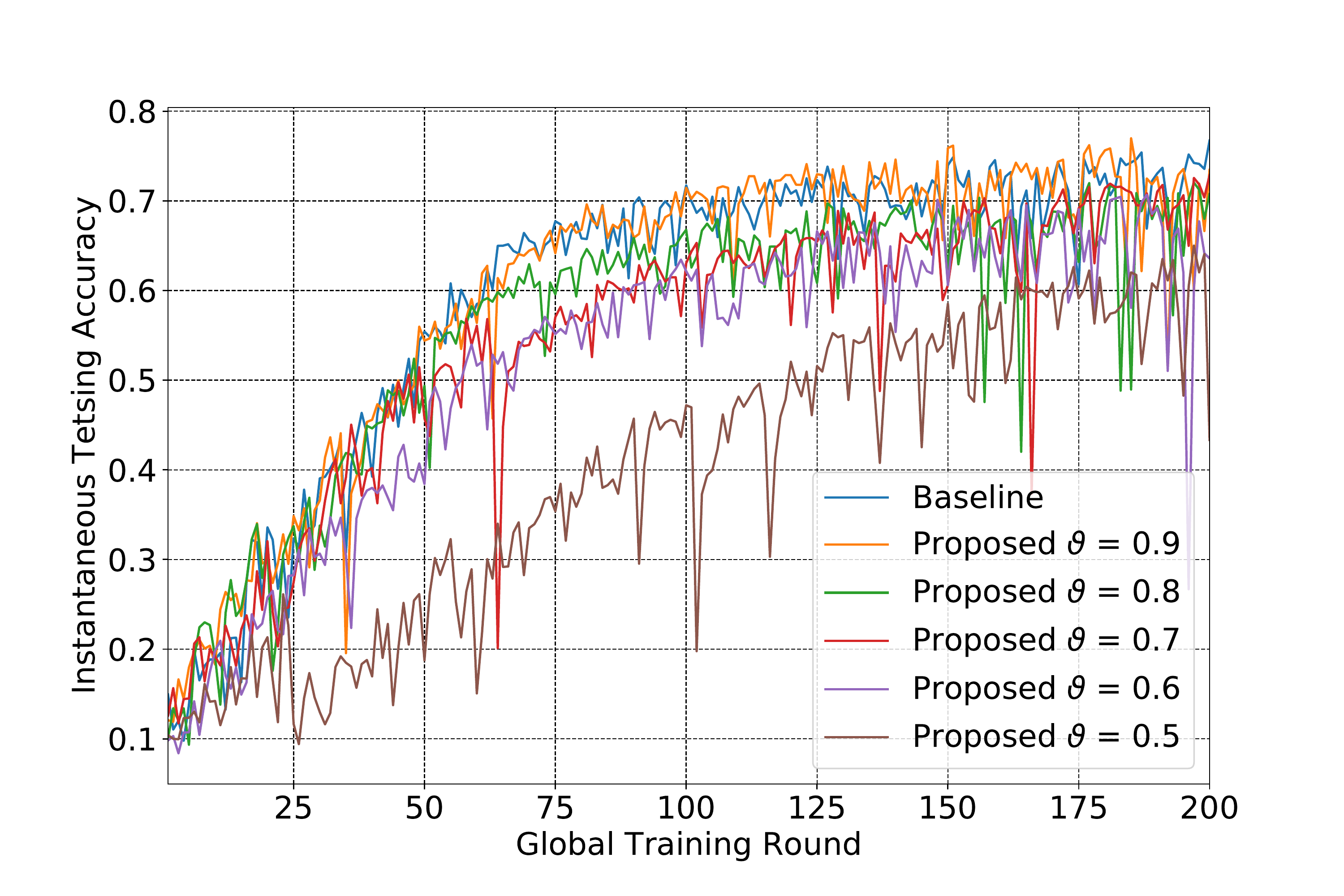}
         \caption{Testing Accuracy vs FEEL Global Round.}
         \label{F:cifar_accu200_u_5}
     \end{subfigure}
     \hfill     
	\caption{Testing Loss and Accuracy when  10\% of K  is selected and FEEL Global Rounds is $200$ (Non-i.i.d, CIFAR-10).}
		\label{F:cifar10_test_loss_noniid_u_5}
\end{figure*}

\subsection{Lesson Learned}
The main lessons and conclusions that can be drawn from our experiments are summarized as follows:
\begin{itemize}
    \item Excluding data samples that do not affect the learning performance can help with energy conservation in FEEL settings regardless of the nature of the data distribution (i.e., i.i.d or not). This is clearly illustrated in \Cref{F:EnergyConsumption200_MNIST_cum_ins,F:EnergyConsumption200_CIFAR_cum_ins}.  
    \item Excluding the data based on higher threshold probability tends to provide more performance gains compared to scenarios that utilize a lower threshold.
    \item Data exclusion is independent of the learning tasks (simple or complex). 
    This has been illustrated  deminstrating that the data exclusion process was effective on both utilized datasets even though the learning task is more complex.
    \item The number of workers strongly affects the performance in terms of energy, accuracy, and loss if the data distribution is non-i.i.d. Simultaneously, for i.i.d, it is sufficient to select fewer workers to reach satisfactory accuracy.
    \item The model size has notable influences on energy expenditure, as seen from the simple and complex learning tasks conducted in this work. In the latter, the total energy consumption is much higher than the energy consumed for the learning task of the former. 
 \end{itemize}
\section{CONCLUSION}
\label{conclusion}
In this work, we propose a novel energy-efficient FEEL approach that contributes to significant improvements in terms of energy consumption. 
We take advantage of using a fine-grained data selection approach that excludes data samples that do not significantly contribute to the loss function.  
In our proposed approach, each worker tunes the received global model parameters while intelligently excluding the samples predicted with high probability based on a predefined threshold. Such samples do not introduce significant contributions to the learning model and their use can adversely impact energy consumption. The proposed approach tunes the transmission power and local CPU speed of workers in a FEEL system to enhance energy efficiency.  
We also exploit the FEEL round deadline constraint to optimize the uploading time and further reduce the expended energy.
Furthermore, we devise an iterative algorithm based on the Golden-section search method to obtain beamforming weights, allocated bandwidth, the local CPU frequency, and transmission power.   
Our experimental results demonstrate outstanding potential for reducing the total energy consumption of FEEL systems. 
Eventually, we show that energy can be significantly saved by adopting our proposed find-grained data selection approach. 
For future directions, accounting for the relationship between the threshold value and the intended learning task can be regarded. 
\bibliographystyle{IEEEtranTIE}
\bibliography{Ref}

\clearpage
\appendices
\section{PROOF OF THEOREM \ref{thm3}}
\label{appendix:theorm1}
For classification problems, softmax and cross-entropy are used thanks to their advantage of faster convergence, low computation,  and more accurate classification results.
\begin{figure}[!h]
\centering
  \includegraphics[width=0.8\linewidth]{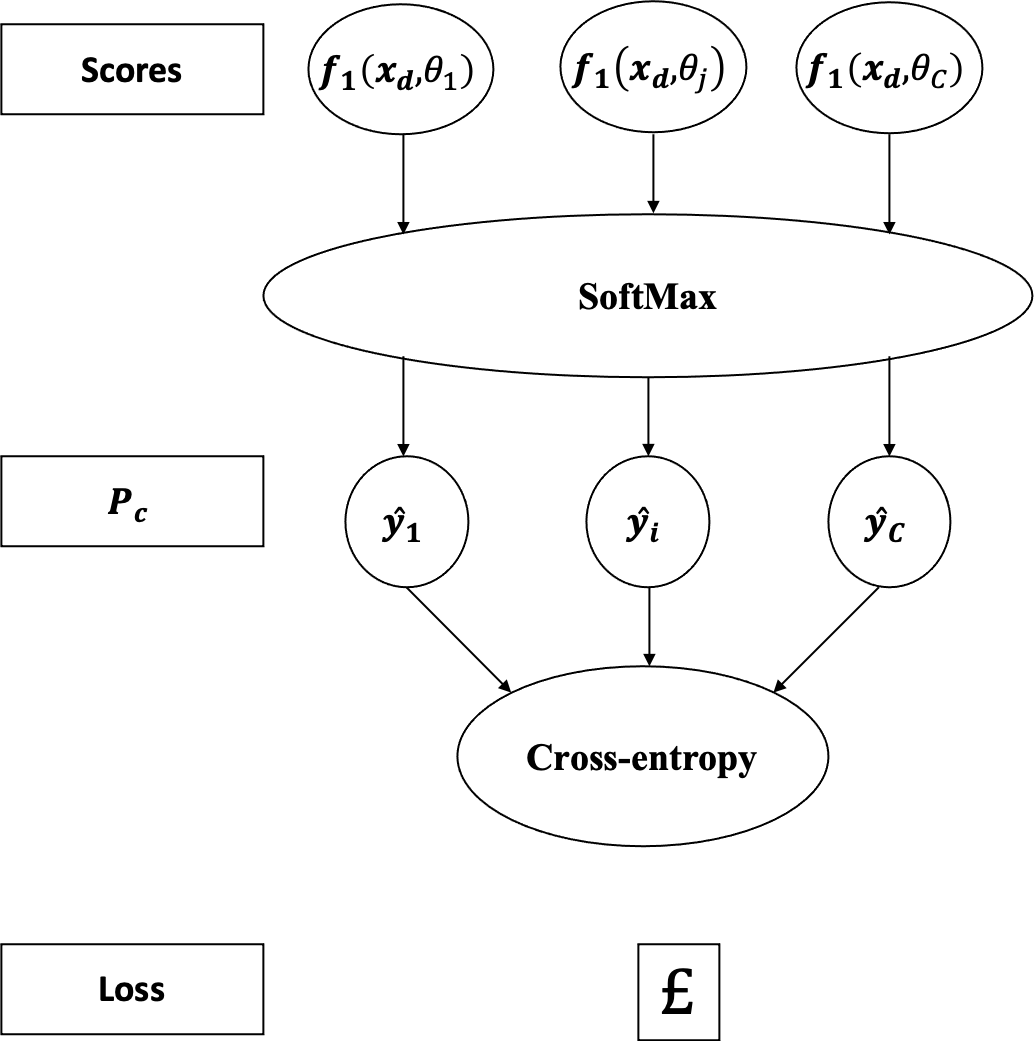}
\caption{Soft-max and cross-entropy (Relations and outputs).}
\label{fig:cross-soft}
\end{figure}
It is worth noting that, we consider only one sample to simplify the presentation as in \Cref{fig:cross-soft}. To start with, the softmax layer for every output $\hat{y_c}$ is defined as:
\begin{equation}
\label{eq:softmax}
    \hat{y} = \frac{\exp f_c(\x, \theta_c)}{\sum_{c=1}^{C} \exp f_c(\x, \theta_c)}
\end{equation}

Let $y_c = p^{(k)}({y}_{d}=c)$ and $log(\hat{y_i}) = \nabla_{\theta} \mathbb{E}[\log f_c(\x, \theta^{(k)}_{n-1})]$ we rewrite \eqref{eq:local_cross_entropy} as
\begin{equation}
\label{eq:simple_entropy}
    \mathcal{L}  = - \sum y_c log(\hat{y_i}) 
\end{equation}

Now, let's first drive the gradient of \eqref{eq:softmax} w.r.t score input (i.e., the gradient of a particular output w.r.t a particular score input). We have two cases of derivatives.
\begin{itemize}
\item Case 1: when input and output indices are the same ($c = j$) where $c ,j = \{1, \ldots, C\}$: 
\begin{equation}
\begin{aligned}
        \frac{\partial \hat{y_c}}{\partial \Psi } & = \nonumber\\
        & \frac{\exp \Psi  {\sum_{c=1}^{C} \exp \Psi } - \exp f_c(\x, \theta) \exp f_c(\x, \theta)}{({\sum_{c=1}^{C} \exp \Psi })^2} \nonumber \\
        & = \frac{\exp \Psi ( {\sum_{c=1}^{C} \exp \Psi } - \exp f_c(\x, \theta))}{({\sum_{c=1}^{C} \exp \Psi })({\sum_{c=1}^{C} \exp \Psi })} \nonumber \\
        &  = \frac{\exp \Psi }{({\sum_{c=1}^{C} \exp \Psi })} \nonumber \\ & . \frac{( {\sum_{c=1}^{C} \exp \Psi } - \exp f_c(\x, \theta))}{({\sum_{c=1}^{C} \exp \Psi })} \nonumber \\
        &  = \frac{\exp \Psi }{({\sum_{c=1}^{C} \exp \Psi })} \nonumber\\
        & .\Bigg( \frac{ {\sum_{c=1}^{C} \exp \Psi } }{{\sum_{c=1}^{C} \exp \Psi }} - \frac{\exp \Psi  }{{\sum_{c=1}^{C} \exp \Psi }}\Bigg)
        \label{eq:case1softmax}
 \end{aligned} 
\end{equation} 

where $\Psi = f_c(\x, \theta_c)$
    By substituting \eqref{eq:softmax} into first and last terms of \eqref{eq:case1softmax} we have:
    \begin{equation}
        \label{case1_updated}
         \frac{\partial \hat{y_c}}{\partial f_c(\x, \theta_c)} = \hat{y_c}.(1-\hat{y_c})
    \end{equation}
\item  Case 2: when input and output indices are not equal ($c \ne j$):
    \begin{align}
    \label{eq:case2softmax}
        \frac{\partial \hat{y_c}}{\partial f_c(\x, \theta_j)} & =  \frac{0 - \exp f_c(\x, \theta) \exp {f_c(\x, \theta_j)}}{({\sum_{c=1}^{C} \exp f_c(\x, \theta_c)})^2} \nonumber \\
        & = -\Bigg(\frac{\exp f_c(\x, \theta_c) }{({\sum_{c=1}^{C} \exp f_c(\x, \theta_c)})}.\nonumber \\ & \quad \frac{\exp {f_c(\x, \theta_j)} }{({\sum_{c=1}^{C} \exp f_c(\x, \theta_c)})}  \Bigg)
    \end{align}
Similarly, by substituting \eqref{eq:softmax} into first and second terms of \eqref{eq:case2softmax} we have:
    \begin{equation}
     \label{case2_updated}
         \frac{\partial \hat{y_c}}{\partial f_c(\x, \theta_j)} = -(\hat{y_c}.\hat{y_j})  
    \end{equation}
\end{itemize}

Next, we link \eqref{case1_updated} and \eqref{case2_updated} to the derivative of cross-entropy w.r.t a particular output.
\begin{align}
    \label{eq:cross-entropy_y}
        \frac{\partial \mathcal{L}}{\partial \hat{y_c}} = y_c \frac{1}{\hat{y_c}}
    \end{align}
Further, we derive the cross-entropy w.r.t input of softmax ,$f_c(\x, \theta_c)$, as we want to combine the derivative for both the input and output of the softmax layer (i.e., back-propagation and chain rule).
    \begin{align}
    \label{eq:cross-entropy_x}
        \frac{\partial \mathcal{L}}{\partial {f_c(\x, \theta_c)}} = - \sum_{c \ne j} y_i \frac{1}{\hat{y_c}} \frac{\partial \hat{y_c}}{\partial f_c(\x, \theta_j)} - y_j \frac{1}{\hat{y_j}} \frac{\partial \hat{y_j}}{\partial f_c(\x, \theta_j)} 
    \end{align}
 By substituting \eqref{case1_updated} and \eqref{case2_updated} into right hand side of \eqref{eq:cross-entropy_y}, we have:
 \begin{align}
    \label{eq:cross-entropy_x2}
        \frac{\partial \mathcal{L}}{\partial {f_c(\x, \theta_c)}} & =  - \sum_{c \ne j} y_c \frac{1}{\hat{y_c}} (-\hat{y_c}\hat{y_j}) - y_j \frac{1}{\hat{y_j}} \hat{y_j}.(1-\hat{y_j})\nonumber \\
        & =  \sum_{c \ne j} y_c \hat{y_j} - y_j + y_j \hat{y_j}\nonumber \\
      &  =   \sum_{c \ne j} y_c \hat{y_j}  + y_j \hat{y_j} - y_j \nonumber\\
      & =  \sum_{\forall c} y_c \hat{y_j} - y_j
    \end{align}

Moreover, we have the fact that the class labels are one-hot encoded; thus, $\sum_{\forall c} y_c = 1$ (both the input and output have the same indices), hence, we rewrite \eqref{eq:cross-entropy_x2} as follow:
\begin{align}
     \frac{\partial \mathcal{L}}{\partial \hat{f_c(\x, \theta_c)}} = & \hat{y_c} - y_c\nonumber \\
     & = \nabla_{\theta} \mathbb{E}[\log f_c(\x, \theta^{(k)}_{n-1})] - p^{(k)}({y}_{d}=c) 
\label{eq:proof_them2}     
\end{align}
From \eqref{eq:proof_them2}, we can note that if the prediction probability of a given input sample, the first term in~\eqref{eq:proof_them2}, becomes closer to $1$, its contribution to the loss function becomes less significant.  \qed
\end{document}